\documentclass[letterpaper]{article}
\usepackage{aaai2027}
\nocopyright
\usepackage[hyphens]{url}
\usepackage{graphicx}
\usepackage{natbib}
\usepackage{caption}
\usepackage{amsmath}
\usepackage{amssymb}
\usepackage{amsthm}

\usepackage{algorithm}
\usepackage{algorithmic}

\usepackage{booktabs}

\theoremstyle{plain}
\newtheorem{lemma}{Lemma}
\newtheorem{proposition}[lemma]{Proposition}
\newtheorem*{lemmavar}{Lemma 2$'$}
\newtheorem{theorem}{Theorem}
\theoremstyle{remark}

\newcommand{\Pnull}{P_{0}}
\newcommand{\Fnorm}[1]{\lVert #1 \rVert_{F}}
\newcommand{\oons}{\textsc{oons}}
\DeclareMathOperator{\col}{col}

\title{FedAlphaEdit: Null-Space-Aligned Merging for Collaborative Knowledge Editing}

\author{
    Sota Sugawara\textsuperscript{\rm 1},
    Yukihiko Okada\textsuperscript{\rm 2,\rm 3}
}
\affiliations{
    \textsuperscript{\rm 1}Graduate School of Science and Technology, University of Tsukuba, Tsukuba, Japan\\
    \textsuperscript{\rm 2}Center for Artificial Intelligence Research, Tsukuba Institute for Advanced Research, University of Tsukuba, Tsukuba, Japan\\
    \textsuperscript{\rm 3}Institute of Systems and Information Engineering, University of Tsukuba, Tsukuba, Japan\\
    s2620520@u.tsukuba.ac.jp
}

\begin{document}

\maketitle

\begin{abstract}
Multiple institutions may each hold their own private knowledge edits and wish to integrate them into a single large language model without sharing raw edit requests. Null-space-constrained editing methods such as AlphaEdit mathematically guarantee that each update leaves unrelated knowledge intact, while collaborative frameworks such as CollabEdit aggregate edits from multiple clients without data sharing. Combining the two appears trivial. However, we show that this naive combination fails structurally, and we identify its cause. Guided by this analysis, we propose FedAlphaEdit. To our knowledge, this is the first collaborative knowledge editing framework that aligns both local editing and the server-side merging rule under a single null-space principle for preserving existing knowledge. FedAlphaEdit builds on null-space-aligned merging, in which clients share projected statistics and the server provably recovers the result of editing everything in one place under a one-shot idealization. Empirically, the proposed method repairs the collapse and brings edit success and preservation simultaneously close to the level of centralized editing across two architecture families. FedAlphaEdit thus lets institutions that cannot share raw edit data, such as hospitals and financial firms, jointly maintain a shared model that closely approximates editing all facts in one place.
\end{abstract}

\section{Introduction}
\label{sec:intro}

Large language models encode factual knowledge that becomes stale or wrong over time. Since retraining the entire model just to correct a few facts is not cost-effective, \emph{knowledge editing} revises only the relevant facts directly. Locate-then-edit methods such as ROME and MEMIT compute closed-form updates to feed-forward weights for this purpose \citep{meng2022rome,meng2023memit}. Applying many edits in sequence, however, degrades unrelated knowledge and general abilities \citep{yao2023editing}. AlphaEdit addresses this by projecting each update onto the null space of the keys of preserved knowledge, so that, for an exact null space, preserved associations are left unchanged by construction \citep{fang2025alphaedit}. This preservation guarantee holds within a single editing party, and what happens when the edits of several \emph{clients} are merged has not been examined.

In practice, institutions such as hospitals correcting clinical knowledge and financial firms applying regulatory updates are routinely barred from sharing raw edit requests, yet each needs its own edits reflected in a shared model, so the edits of several institutions must be consolidated without exchanging raw edit requests \citep{zheng2025collabedit,mcmahan2017fedavg}. Keeping edit requests local is a data-locality property and is not by itself a privacy guarantee, a distinction we return to in Section~\ref{sec:limitations}. CollabEdit formalizes this collaborative setting and introduces a covariance-weighted merging rule that closely mimics applying all edits centrally, while communicating only key statistics \citep{zheng2025collabedit}.

Since AlphaEdit is plug-and-play, and CollabEdit both derives its merge without reference to the internal structure of the editor and is presented as general enough to integrate other KE methods \citep{zheng2025collabedit}, combining them looks natural. We show that this naive combination fails structurally. The difference from AlphaEdit itself is the following. AlphaEdit constrains a single editor's update locally, whereas our contribution is to align the server-side merging rule with the same null-space principle, and Lemmas~\ref{lem:merge} and~\ref{lem:cancel} prove that the local constraint alone does not survive a covariance-weighted merge.

Under an identical evaluation pipeline, the naive combination degrades along two axes at once (Table~\ref{tab:g1}). First, preservation breaks, as accuracy on knowledge neighboring the edited facts (neighborhood specificity) drops by $4.2$ points, even though each client's update satisfies the null-space constraint almost perfectly, while merging pushes up to $14\%$ of the merged update outside the null space. Second, and unexpectedly, editing itself breaks, as edit success falls by $17.4$ points. Both degradations grow monotonically with the number of clients $N$ at a fixed total edit budget, and vanish by definition at $N=1$ (Figure~\ref{fig:scaling}). This is a degradation specific to the collaborative scale, absent from any centralized configuration.

\begin{table}[b]
\centering
\begin{tabular}{lccc}
\toprule
 & Centralized & Naive & $\Delta$ \\
\midrule
ES    & $97.08$ & $79.69 \pm 0.88$ & $-17.39$ \\
NS    & $64.43$ & $60.24 \pm 0.11$ & $-4.19$ \\
\oons & ${\approx}3{\times}10^{-4}\%$ (floor) & $14.0\text{--}14.1\%$ & --- \\
\bottomrule
\end{tabular}
\caption{Naive combination versus the centralized reference on GPT-2 XL (MCF, 5000 edits, $10 \times 500$). Centralized is a single deterministic run; Naive is three-seed mean$\pm$SD. Metric definitions are in Sections~\ref{sec:exp-setup} and~\ref{sec:measurement}.}
\label{tab:g1}
\end{table}

\begin{figure*}[t]
\centering
\includegraphics[width=0.95\textwidth]{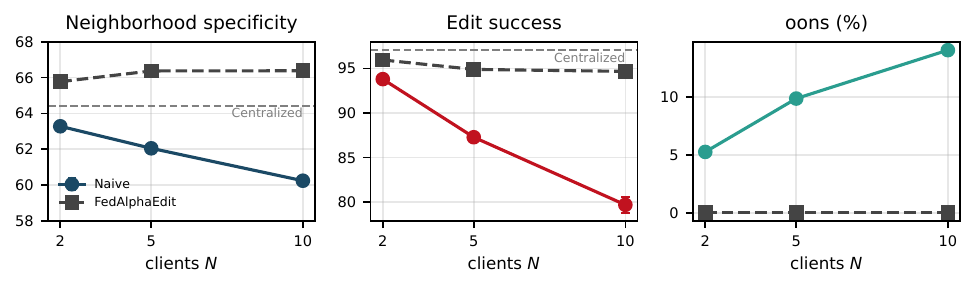}
\caption{The naive combination and FedAlphaEdit as the same total workload ($5000$ edits) is partitioned across a growing number of clients $N$: neighborhood specificity (left), edit success (middle), and \oons\ (right; the FedAlphaEdit series is at the measurement floor throughout). Dashed lines mark the centralized reference ($64.43$ and $97.08$). All points are three-seed mean$\pm$SD. The per-client edit count co-varies with $N$ (Section~\ref{sec:exp-naive}).}
\label{fig:scaling}
\end{figure*}

These two failures share a single cause. CollabEdit's merge weights exactly cancel the MEMIT-type normalizer (Lemma~\ref{lem:cancel}(a)), but AlphaEdit's normalizer has a different structure, so the cancellation no longer holds and both axes collapse.
Guided by this analysis, we derive, in the collaborative setting, an AlphaEdit analogue of the cancellation identity. If each client shares a projected covariance statistic and the server merges with the matching weights, the aggregate recovers global AlphaEdit editing \emph{exactly} (Theorem~\ref{thm:exact}, equations~\eqref{eq:nall}--\eqref{eq:proposed}), in the one-shot regime of a single solve per client; under genuine sequential editing the identity holds only approximately, and we quantify the resulting drift empirically. The communicated object is a covariance statistic of the same class as CollabEdit's, so no raw keys are shared. We call this \emph{null-space-aligned merging}. Empirically, it restores neighborhood specificity to the centralized level and \oons\ to the measurement floor, and closes most of the edit-success gap on both GPT-2 XL and Qwen2.5-1.5B.
\noindent In summary, we contribute:
\begin{itemize}
\item \textbf{A phenomenon.} We identify and quantify a \emph{double collapse}, of both editing and preservation, when null-space-constrained edits are merged by a covariance-weighted rule. The collapse is specific to the federated setting, growing monotonically with the number of clients, whether the total edit budget or the per-client edit count is held fixed, and vanishing at a single client.
\item \textbf{A mechanism.} An algebraic account of why the naive combination must break, independent of architecture.
\item \textbf{A method.} FedAlphaEdit, based on null-space-aligned merging, which aggregates projected statistics and comes with a one-shot exactness theorem, and contains CollabEdit's merge as its $\Pnull = I$ instance (Proposition~\ref{prop:editormatched}).
\end{itemize}

\section{Related Work}
\label{sec:related}

\subsection{Locate-then-Edit Knowledge Editing and Sequential Degradation}
ROME and MEMIT established locate-then-edit knowledge editing, computing closed-form, batched updates to feed-forward weights \citep{meng2022rome,meng2023memit}. A recurring finding is that applying edits in sequence gradually degrades unrelated knowledge and general language ability \citep{yao2023editing}. AlphaEdit mitigates this by constraining each update to the null space of the preserved-knowledge keys, giving a preservation guarantee under an exact null space \citep{fang2025alphaedit}. Because the null space is estimated from a finite key sample, the projector is only approximate; BetaEdit quantifies the resulting leakage and its accumulation under sequential editing in the single-user setting \citep{liu2026betaedit}. EvoEdit and LOKI carry null-space constraints further along this sequential axis, via an evolving null space and memory-free lifelong editing respectively \citep{lyu2026evoedit,eskandar2026loki}. All of this work operates along the \emph{time} axis, successive edits to a single model, and treats an approximate, inherited leak.

\subsection{Collaborative and Federated Knowledge Editing}
CollabEdit formalizes collaborative knowledge editing and proposes a non-destructive, covariance-weighted merge that mimics centralized global editing while sharing only key statistics rather than raw edit requests \citep{zheng2025collabedit,mcmahan2017fedavg}. Its non-destructiveness, however, rests on an exact cancellation with the MEMIT-type normalizer (Lemma~\ref{lem:cancel}, by our analysis); this cancellation is editor-dependent, and CollabEdit's behavior when combined with a null-space-constrained editor has not been examined. We show in Proposition~\ref{prop:editormatched} that CollabEdit's merge is the $\Pnull = I$ instance of the merge rule we derive. Federated locate-then-edit work such as FedLEKE targets communication and computation efficiency rather than preservation \citep{zhao2025fleke}. Its retrieval-based federation is orthogonal to the covariance-weighted merge structure we analyze. We include a reimplementation as a reference row (Table~\ref{tab:g2res}). To our knowledge, FedAlphaEdit is the first collaborative knowledge editing framework that aligns both local editing and the server-side merging rule under a single null-space preservation principle. Task-vector merging methods such as Task Arithmetic and TIES-Merging study interference as a loss of task performance \citep{ilharco2023task,yadav2023ties}, and null-space projections have also been applied at the merge stage for task-vector and LoRA fusion \citep{qiu2026nufilt,qiu2025mingle}. Neither line addresses our setting, in which closed-form updates derived from key covariances carry a preservation guarantee that must survive aggregation.

\section{Method}
\label{sec:method}

We first diagnose the failure (Sections~\ref{sec:lemma1}--\ref{sec:lemma4}), then derive the fix, null-space-aligned merging, an exact merge rule stated as Theorem~\ref{thm:exact} and implemented in Algorithm~\ref{alg:nsalign} (Section~\ref{sec:theorem}).

\subsection{Preliminaries and Notation}
\label{sec:prelim}
Let $W \in \mathbb{R}^{d \times d_0}$ be the edited feed-forward (down-projection) weight, and let $K_0 \in \mathbb{R}^{d_0 \times n_0}$ and $V_0$ collect the keys and values of preserved knowledge, with $W K_0 = V_0$ before editing. The AlphaEdit projector $\Pnull \in \mathbb{R}^{d_0 \times d_0}$ is constructed from the eigendecomposition of $K_0 K_0^{\top}$. Collecting the eigenvectors $\hat{U}$ whose eigenvalues fall below a threshold (specified in Section~\ref{sec:exp-setup}) gives $\Pnull = \hat{U}\hat{U}^{\top}$, which is symmetric and idempotent ($\Pnull = \Pnull^{\top}$, $\Pnull^2 = \Pnull$). In the idealized case $\Pnull K_0 = 0$ exactly, while in the realistic (approximate) case $\Fnorm{\Pnull K_0} \le \varepsilon$. Preservation means $(W+\Delta)K_0 = V_0$, equivalently $\Delta K_0 = 0$.

For clients $i = 1,\dots,N$, let $K_i$ and $V_i$ be the edit keys and target values, and write $A_i := K_{p,i}K_{p,i}^{\top} + K_i K_i^{\top}$ for the symmetric positive semidefinite second-moment matrix ($K_{p,i}$ accumulates a client's own prior edits). Each client's AlphaEdit update is $\Delta_i = R_i K_i^{\top}\,\Pnull (A_i \Pnull + \beta I)^{-1}$, with residual $R_i = V_i - W K_i$ and ridge $\beta$. Each client sends its local update $\Delta_i$ to the server, which consolidates them into a single update $\Delta_{\text{naive}}$ to the shared model. CollabEdit performs this consolidation with a covariance-weighted merge, which we write in the abstract form
\begin{equation}
\Delta_{\text{naive}} = \sum_i \Delta_i\, M_i, \qquad M_i = C_i\, C_{\text{tot}}^{-1},
\label{eq:merge-abstract}
\end{equation}
where $C_i = K_i K_i^{\top}$ and $C_{\text{tot}} = \sum_j C_j$, so $\sum_i M_i = I$ (a matrix-valued weighted average). The released CollabEdit implementation regularizes this merge with a shared base covariance $\lambda C$ ($\lambda$ a scalar update weight, $C$ a shared second-moment matrix; values in Appendix~\ref{app:impl}), so that the merge weights become $M_i = (C_i + \lambda C)(C_{\text{tot}} + \lambda C)^{-1}$, with $\lambda C$ added to both the per-client covariance $C_i$ and the total $C_{\text{tot}}$. Because $\lambda C$ is added to each client's weight but only once to the total, a surplus arises whose consequence for edit success is analyzed in Section~\ref{sec:lemma4}.

\subsection{A Right-Projected Form of the Update}
\label{sec:lemma1}
\begin{lemma}[Right-projected form]
\label{lem:rightproj}
The AlphaEdit update admits the factorization $\Delta = D\Pnull$ for some $D$.
\end{lemma}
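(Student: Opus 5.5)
The update is $\Delta = R K^{\top}\Pnull(A\Pnull+\beta I)^{-1}$, which carries $\Pnull$ on the left of the inverse rather than on the right. The plan is to move $\Pnull$ through the inverse with a push-through identity. That will exhibit $\Delta$ in the form $D\Pnull$, and in fact show $\Delta = \Delta\Pnull$.

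\emph{Step 1 (invertibility).} I first check that $A\Pnull+\beta I$ is invertible for $\beta>0$. The product $A\Pnull$ is similar on its nonzero spectrum to $\Pnull A\Pnull$, which is symmetric positive semidefinite. Hence every eigenvalue of $A\Pnull$ is real and nonnegative, and no eigenvalue of $A\Pnull+\beta I$ can vanish. This is the only step with real content, and I expect it to be the main, though mild, obstacle. The similarity argument rests on the standard fact that $XY$ and $YX$ share their nonzero eigenvalues, applied with $X=\Pnull$ and $Y=A\Pnull$, together with $\Pnull^2=\Pnull$.

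\emph{Step 2 (push-through).} Using $\Pnull^2=\Pnull$, I verify the intertwining relation
\begin{equation*}
\Pnull(A\Pnull+\beta I) = \Pnull A\Pnull+\beta\Pnull = (\Pnull A\Pnull+\beta I)\Pnull .
\end{equation*}
Both bracketed matrices are invertible: the left one by Step 1, and the right one because $\Pnull A\Pnull$ is positive semidefinite and $\beta>0$. Multiplying on the left by $(\Pnull A\Pnull+\beta I)^{-1}$ and on the right by $(A\Pnull+\beta I)^{-1}$ gives
\begin{equation*}
\Pnull(A\Pnull+\beta I)^{-1}=(\Pnull A\Pnull+\beta I)^{-1}\Pnull .
\end{equation*}

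\emph{Step 3 (conclude).} Set $D := RK^{\top}(\Pnull A\Pnull+\beta I)^{-1}$. Step 2 then gives $\Delta = D\Pnull$.

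As a corollary, $\Delta\Pnull = D\Pnull^2 = \Delta$. In the idealized case $\Pnull K_0=0$ this yields $\Delta K_0=0$, which is the property the later lemmas will test against the merge weights $M_i$. The same argument applies verbatim to each client update $\Delta_i$, with $A$ replaced by $A_i$.
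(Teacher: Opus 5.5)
Your proposal is correct and follows the paper's proof essentially step for step. Both arguments rest on the intertwining relation $\Pnull(A\Pnull+\beta I)=(\Pnull A\Pnull+\beta I)\Pnull$ and on invertibility via the shared nonzero spectrum of $A\Pnull$ and $\Pnull A\Pnull$, then conclude with $D=RK^{\top}(\Pnull A\Pnull+\beta I)^{-1}$. You are implicitly using $\Pnull^{\top}=\Pnull$ when you call $\Pnull A\Pnull$ positive semidefinite, which is worth stating explicitly.
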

The proof, an elementary consequence of the symmetry and idempotency of $\Pnull$, is in Appendix~\ref{app:proof-thm1}.
In particular, $\Delta K_0 = D\Pnull K_0 = 0$ whenever $\Pnull K_0 = 0$. Since only the symmetry and positive semidefiniteness of $A$ are used, the factorization survives cross-round accumulation of $K_{p,i}$.

\subsection{Merging Breaks Preservation}
\label{sec:lemma2}
\begin{lemma}[Merging breaks null-space preservation]
\label{lem:merge}
Let $\Delta_i = D_i \Pnull$ with $\Pnull K_0 = 0$, and let the server form the merged update of the naive combination, $\Delta_{\text{naive}} = \sum_i \Delta_i M_i$, with $M_i = C_i C_{\text{tot}}^{-1}$ (Section~\ref{sec:prelim}), $C_i = K_i K_i^{\top}$, and $C_{\text{tot}} = \sum_i C_i$ invertible. Then
\begin{equation}
\Delta_{\text{naive}} K_0 = \sum_i D_i\, [\Pnull,\, M_i]\, K_0,
\label{eq:commutator}
\end{equation}
where $[\Pnull, M_i] := \Pnull M_i - M_i \Pnull$. In particular, although every client update exactly preserves $K_0$, the merged update is no longer guaranteed to do so. Commutation of every merge weight $M_i$ with $\Pnull$ makes every term of \eqref{eq:commutator} vanish and is thus sufficient for preservation. When some $M_i$ fails to commute with $\Pnull$, preservation generically fails, because the surviving terms could cancel across clients only through a coincidence among the private $D_i$, which depend on each client's private edit content.
\end{lemma}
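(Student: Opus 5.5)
The identity \eqref{eq:commutator} follows from one algebraic move: in each term $D_i\Pnull M_i K_0$, swap $\Pnull$ and $M_i$ at the price of a commutator, then use $\Pnull K_0 = 0$ to kill the swapped term. Concretely, I substitute $\Delta_i = D_i\Pnull$ (which Lemma~\ref{lem:rightproj} guarantees for the AlphaEdit update) into $\Delta_{\text{naive}} K_0 = \sum_i D_i \Pnull M_i K_0$. For each $i$ I write
\[
\Pnull M_i = M_i \Pnull + [\Pnull, M_i],
\]
so that
\[
D_i \Pnull M_i K_0 = D_i M_i (\Pnull K_0) + D_i [\Pnull, M_i] K_0 .
\]
The hypothesis $\Pnull K_0 = 0$ removes the first term. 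Summing over $i$ gives \eqref{eq:commutator}. The only other point to check is that $M_i = C_i C_{\text{tot}}^{-1}$ is well defined, which holds because $C_{\text{tot}}$ is invertible. Nothing beyond linearity is used.

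Sufficiency of commutation is immediate from \eqref{eq:commutator}. If $[\Pnull, M_i] = 0$ for every $i$, each summand vanishes and $\Delta_{\text{naive}} K_0 = 0$, regardless of the $D_i$.

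To support the claim that preservation \emph{can} fail, I would give an explicit counterexample. Take $N=2$, $d_0 = 2$, $\Pnull = e_2 e_2^{\top}$ and $K_0 = e_1$, so that $\Pnull K_0 = 0$. Choose $K_1$ and $K_2$ so that $C_1 C_{\text{tot}}^{-1}$ is not diagonal, for instance $K_1 = (1,1)^{\top}$ and $K_2 = e_1$. A direct computation then shows $[\Pnull, M_1] e_1 \neq 0$, and choosing $D_2 = 0$ with $D_1 = I$ yields $\Delta_{\text{naive}} K_0 \neq 0$. I would keep the arithmetic brief.

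The genuinely delicate part is the word ``generically.'' I would make it precise as follows. Fix the $M_i$, and suppose some $[\Pnull, M_{i_0}] K_0 \neq 0$. The map
\[
(D_1,\dots,D_N) \mapsto \sum_i D_i [\Pnull, M_i] K_0
\]
is linear, and it is nonzero: setting $D_{i_0}$ equal to a suitable rank-one matrix and all other $D_i = 0$ produces a nonzero output. Its kernel is therefore a proper linear subspace, which has Lebesgue measure zero. Preservation thus holds only on a measure-zero set of $(D_1,\dots,D_N)$, i.e.\ only through a coincidence among the private updates. I expect this measure-theoretic phrasing to be the main obstacle, because the $D_i$ are not free: they are determined by $R_i$ and $A_i$. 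I would note that $D_i$ depends linearly on the residual $R_i$, which is arbitrary given the edit content. So genericity can be taken over the residuals $R_i$ instead, with the same linear-kernel argument, provided the factor $K_i^{\top}\Pnull(A_i\Pnull+\beta I)^{-1}[\Pnull,M_i]K_0$ is nonzero.
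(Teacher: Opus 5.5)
Your derivation of \eqref{eq:commutator} is exactly the paper's one-line argument: substitute $\Pnull M_i = M_i\Pnull + [\Pnull,M_i]$ and kill $M_i\Pnull K_0$. Your sufficiency remark is also correct. The paper proves only the identity and leaves the ``can fail'' claim informal, so your counterexample and genericity argument go beyond it. That is worthwhile, but the counterexample is wrong as stated. With $K_1=(1,1)^{\top}$ and $K_2=e_1$ you get $C_{\text{tot}}=\begin{pmatrix}2&1\\1&1\end{pmatrix}$, $C_{\text{tot}}^{-1}=\begin{pmatrix}1&-1\\-1&2\end{pmatrix}$, and $M_1=\begin{pmatrix}0&1\\0&1\end{pmatrix}$. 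Hence $M_1e_1=0$ and $[\Pnull,M_1]e_1=\Pnull M_1e_1=0$. Likewise $M_2=I-M_1$ gives $M_2e_1=e_1$ and $\Pnull M_2e_1=0$. So $\Delta_{\text{naive}}K_0=0$ for \emph{every} choice of $D_1,D_2$, and the example exhibits preservation rather than its failure. The reason is that $\Pnull M_iK_0=\Pnull K_i\,(K_i^{\top}C_{\text{tot}}^{-1}K_0)$. Client 2's key lies in $\col(K_0)$, so $\Pnull K_2=0$, and $K_1$ is orthogonal to $C_{\text{tot}}^{-1}K_0=(1,-1)^{\top}$. Your criterion ``$M_1$ not diagonal'' only guarantees $[\Pnull,M_1]\neq0$ (here $[\Pnull,M_1]=\begin{pmatrix}0&-1\\0&0\end{pmatrix}$). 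What you need is $[\Pnull,M_1]K_0\neq0$, i.e.\ a nonzero $(2,1)$ entry of $M_1$. Taking $K_2=e_2$ instead gives $C_{\text{tot}}^{-1}=\begin{pmatrix}2&-1\\-1&1\end{pmatrix}$ and $M_1=\begin{pmatrix}1&0\\1&0\end{pmatrix}$. Then $\Delta_{\text{naive}}K_0=\Pnull M_1e_1=e_2\neq0$ with $D_1=I$, $D_2=0$, as intended.

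Your original example is itself a case where some $M_i$ fails to commute with $\Pnull$, yet preservation holds identically in the $D_i$. So the operative hypothesis for generic failure is $\Pnull M_{i_0}K_0\neq0$ for some $i_0$, not mere non-commutation as the statement's last sentence phrases it. Your genericity paragraph correctly assumes this stronger condition, and with it the linear-kernel, measure-zero argument is sound. The proviso you leave open there is automatic. By Lemma~\ref{lem:rightproj} and $\Pnull K_0=0$, the factor $K_i^{\top}\Pnull(A_i\Pnull+\beta I)^{-1}[\Pnull,M_i]K_0$ equals $(\Pnull K_i)^{\top}(\Pnull A_i\Pnull+\beta I)^{-1}(\Pnull K_i)\,b$ with $b=K_i^{\top}C_{\text{tot}}^{-1}K_0$. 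Since $(\Pnull A_i\Pnull+\beta I)^{-1}$ is positive definite, this vanishes if and only if $\Pnull K_i b=\Pnull M_iK_0=0$. Hence genericity over the residuals $R_i$ holds exactly under the same hypothesis.
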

\begin{proof}
$\Delta_{\text{naive}} K_0 = \sum_i D_i \Pnull M_i K_0$, and for each term $\Pnull M_i K_0 = M_i \Pnull K_0 + [\Pnull, M_i] K_0 = [\Pnull, M_i] K_0$ since $\Pnull K_0 = 0$.
\end{proof}

The proof of \eqref{eq:commutator} makes no use of the specific form of the merge weights, so the identity holds verbatim for arbitrary $M_i$, including the regularized weights $(C_i + \lambda C)(C_{\text{tot}} + \lambda C)^{-1}$ of the released implementation (Appendix~\ref{app:impl}).

\begin{lemmavar}[Approximate case]
Under $\Fnorm{\Pnull K_0} \le \varepsilon$,
\begin{equation}
\Delta_{\text{naive}} K_0 = \underbrace{\sum_i D_i [\Pnull, M_i] K_0}_{\text{structural term}} + \underbrace{\sum_i D_i M_i (\Pnull K_0)}_{\text{inherited term}},
\end{equation}
where the inherited term is bounded by $\sum_i \lVert D_i \rVert_2 \lVert M_i \rVert_2 \cdot \varepsilon$, whereas each client's own leakage $\Fnorm{\Delta_i K_0} \le \lVert D_i \rVert_2\, \varepsilon$ involves no structural term. The distinctive evidence of a structural leak is \oons\ (Section~\ref{sec:measurement}).
\end{lemmavar}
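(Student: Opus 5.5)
The plan is to repeat the one-line argument of Lemma~\ref{lem:merge}, this time keeping the term that the exact null-space assumption killed. Start from $\Delta_i = D_i \Pnull$, which Lemma~\ref{lem:rightproj} supplies; that lemma uses only the symmetry and idempotency of $\Pnull$, not $\Pnull K_0 = 0$, so it still applies in the approximate case. Write
\begin{equation*}
\Delta_{\text{naive}} K_0 = \sum_i D_i \Pnull M_i K_0 .
\end{equation*}
For each $i$, add and subtract $M_i \Pnull K_0$:
\begin{equation*}
\Pnull M_i K_0 = [\Pnull, M_i] K_0 + M_i \Pnull K_0 .
\end{equation*}
Summing over $i$ gives the claimed decomposition exactly, into the structural commutator term and the inherited term $\sum_i D_i M_i (\Pnull K_0)$. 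No property of the $M_i$ is used, so, as remarked after Lemma~\ref{lem:merge}, the decomposition holds for the regularized weights too.

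For the bound on the inherited term, apply the triangle inequality to the sum. Then use the mixed submultiplicativity $\Fnorm{XYZ} \le \lVert X\rVert_2 \lVert Y\rVert_2 \Fnorm{Z}$, applied with $X = D_i$, $Y = M_i$, $Z = \Pnull K_0$. This yields
\begin{equation*}
\Fnorm{\textstyle\sum_i D_i M_i \Pnull K_0} \le \sum_i \lVert D_i\rVert_2 \lVert M_i\rVert_2\, \varepsilon .
\end{equation*}
In the same way, the per-client leakage $\Delta_i K_0 = D_i (\Pnull K_0)$ satisfies $\Fnorm{\Delta_i K_0} \le \lVert D_i\rVert_2\,\varepsilon$. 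This expression contains no commutator at all, because the client applies no right factor $M_i$ between $\Pnull$ and $K_0$. That makes precise the contrast in the statement: a single client leaks only the inherited, $\varepsilon$-sized amount, while the merge adds a structural term that does not vanish as $\varepsilon \to 0$.

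The one point needing care is the norm bookkeeping. The statement mixes Frobenius and spectral norms, so I would state the inequality $\Fnorm{XZ} \le \lVert X\rVert_2 \Fnorm{Z}$ explicitly, as a consequence of applying $X$ column by column. The rest is routine.

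I would also note that the closing sentence about \oons\ is interpretive rather than a mathematical claim. The structural term is not bounded by $\varepsilon$, since it depends on $\lVert [\Pnull, M_i] \rVert$, and \oons\ measures exactly this out-of-null-space component. It therefore needs no proof beyond pointing to the two bounds.
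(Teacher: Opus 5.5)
Your proof is correct and is the same argument the paper uses: the add-and-subtract step $\Pnull M_i K_0 = [\Pnull, M_i]K_0 + M_i \Pnull K_0$ from the proof of Lemma~\ref{lem:merge}, keeping the second term instead of discarding it, followed by $\Fnorm{XYZ} \le \lVert X\rVert_2 \lVert Y\rVert_2 \Fnorm{Z}$ for both the inherited term and $\Delta_i K_0 = D_i(\Pnull K_0)$. One small correction to your interpretive remark: the structural term \emph{need not} vanish as $\varepsilon \to 0$, rather than ``does not'' vanish, since it is zero whenever every $M_i$ commutes with $\Pnull$ (for example, scalar weights, Proposition~\ref{prop:degenerate}).
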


The structural term exists only because there is a merge step, and is thus absent by definition from centralized and single-user editing. The inherited term is the single-user approximate-null-space leak, carried through the merge weights.

\begin{proposition}[When preservation survives]
\label{prop:degenerate}
Preservation holds, i.e. $\Delta_{\text{naive}} K_0 = 0$, if the merge weights are scalar ($M_i = \alpha_i I$), in particular for simple averaging $\alpha_i = 1/N$.
\end{proposition}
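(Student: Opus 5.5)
The plan is to route the argument through the commutator identity \eqref{eq:commutator} of Lemma~\ref{lem:merge}, using the setting of that lemma: each $\Delta_i = D_i\Pnull$ (Lemma~\ref{lem:rightproj}) and $\Pnull K_0 = 0$ exactly. The paper notes after Lemma~\ref{lem:merge} that the identity holds verbatim for arbitrary merge weights $M_i$, so it applies to the scalar weights $M_i = \alpha_i I$ without any change. The key step is then the observation that a scalar multiple of the identity commutes with every matrix. Concretely, I will check that
\begin{equation*}
[\Pnull, \alpha_i I] = \alpha_i \Pnull - \alpha_i \Pnull = 0 .
\end{equation*}
Substituting this into \eqref{eq:commutator} makes every summand $D_i[\Pnull, M_i]K_0$ vanish, so $\Delta_{\text{naive}} K_0 = 0$.

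As a sanity check, and so the proposition does not hinge on the invertibility hypothesis on $C_{\text{tot}}$ used in Lemma~\ref{lem:merge}, I will also give the direct one-line computation:
\begin{equation*}
\Delta_{\text{naive}} K_0 = \sum_i \alpha_i D_i \Pnull K_0 = 0 .
\end{equation*}
This makes explicit that no relation among the $\alpha_i$ is needed; in particular, the constraint $\sum_i \alpha_i = 1$ plays no role. Simple averaging, $\alpha_i = 1/N$, is then the special case named in the statement.

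I do not expect a genuine obstacle here; the only care needed is bookkeeping. The invertibility of $C_{\text{tot}}$ is irrelevant once the weights are scalar, so I will state the result without it. I would also add a remark that only exact preservation is claimed. In the approximate setting of Lemma~2$'$, the structural term still vanishes identically, while the inherited term $\sum_i \alpha_i D_i \Pnull K_0$ remains, bounded by $\sum_i |\alpha_i|\,\lVert D_i\rVert_2\,\varepsilon$. This matches the single-user leak and does not contradict the proposition.
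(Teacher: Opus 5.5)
Your proof is correct and matches the paper's, which likewise observes that $[\Pnull, \alpha_i I] = 0$, so every term of \eqref{eq:commutator} vanishes. Your direct computation $\sum_i \alpha_i D_i \Pnull K_0 = 0$ is a valid sanity check, and it also makes clear that neither the invertibility of $C_{\text{tot}}$ nor the normalization $\sum_i \alpha_i = 1$ is needed.
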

\begin{proof}
For scalar weights $[\Pnull, \alpha_i I] = 0$, so \eqref{eq:commutator} vanishes.
\end{proof}

The commuting condition also identifies the driver of the failure. If every client's edit-key covariance is free of cross-covariance between the preserved subspace $\col(K_0)$ and its orthogonal complement, every merge weight commutes with $\Pnull$ and preservation holds. In practice this sufficient condition is rarely available. Under an approximate null space the preserved-key second moment $K_0 K_0^{\top}$ is full-rank, so edit keys typically correlate with the preserved statistics and cross-covariance is generically present. The condition is then lost in general and preservation fails. Moreover, $C_{\text{tot}}^{-1}$ is shared, so a single client's cross-covariance corrupts every merge weight. Empirically, the merged update places up to $14\%$ of its norm share outside the null space (Section~\ref{sec:exp-naive}).

\subsection{The Cancellation Identity and the Edit-Success Collapse}
\label{sec:lemma4}
The loss of specificity above is only half the phenomenon; the collapse of edit success has the same algebraic root.
\begin{lemma}[Cancellation identity and its breaking]
\label{lem:cancel}
\emph{(a) MEMIT.} For MEMIT-type updates $\Delta_i = R_i K_i^{\top}(C_i + \lambda C)^{-1}$, the CollabEdit weights cancel exactly, $\Delta_i(C_i + \lambda C) = R_i K_i^{\top}$, so that $\Delta_{\text{naive}} = \sum_i R_i K_i^{\top}(C_{\text{tot}} + \lambda C)^{-1} = \Delta_{\text{central}}^{\mathrm{MEMIT}}$ exactly, under the same one-shot, common-state evaluation as Theorem~\ref{thm:exact}.
\emph{(b) AlphaEdit.} The root cause is that AlphaEdit's normalizer $\Pnull A_i \Pnull + \beta I$ has a different structure from $C_i + \lambda C$, so the CollabEdit weights do not cancel and an error $E := \Delta_{\text{naive}} - \Delta_{\text{central}}^{\mathrm{Alpha}}$ remains. It decomposes as
\begin{equation}
E(I-\Pnull) = \Delta_{\text{naive}}(I-\Pnull) \quad\text{and}\quad E\Pnull,
\label{eq:errsplit}
\end{equation}
the first being the out-of-null-space leak measured by \oons\ (loss of specificity) and the second a distortion of the edit within $\col(\Pnull)$ (the collapse of edit success).
\end{lemma}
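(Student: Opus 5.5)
For part (a) I plan a direct substitution into the merge formula \eqref{eq:merge-abstract} with the regularized weights of the released implementation, $M_i = (C_i+\lambda C)(C_{\text{tot}}+\lambda C)^{-1}$. Right-multiplying the MEMIT-type update by $(C_i+\lambda C)$ gives $\Delta_i(C_i+\lambda C) = R_i K_i^{\top}(C_i+\lambda C)^{-1}(C_i+\lambda C) = R_i K_i^{\top}$, provided each $C_i+\lambda C$ is invertible. I will state this as a standing assumption ($\lambda>0$ and $C$ positive definite suffice). Summing,
\begin{equation*}
\Delta_{\text{naive}} = \sum_i R_i K_i^{\top}(C_{\text{tot}}+\lambda C)^{-1} = \Big(\sum_i R_i K_i^{\top}\Big)(C_{\text{tot}}+\lambda C)^{-1}.
\end{equation*}
Under the one-shot, common-state evaluation, every residual $R_i = V_i - WK_i$ is computed against the same $W$. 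Stacking the clients then gives $\sum_i R_iK_i^{\top} = R_{\text{all}}K_{\text{all}}^{\top}$ and $C_{\text{tot}} = K_{\text{all}}K_{\text{all}}^{\top}$, which is exactly the centralized MEMIT solve $\Delta_{\text{central}}^{\mathrm{MEMIT}}$. The only care needed is to make the common-state hypothesis explicit, since sequential residuals would break the stacking.

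For part (b) the first step is to check that $E$ really splits as in \eqref{eq:errsplit}. Using $I = \Pnull + (I-\Pnull)$, we have $E = E\Pnull + E(I-\Pnull)$. By Lemma~\ref{lem:rightproj}, $\Delta_{\text{central}}^{\mathrm{Alpha}} = D\Pnull$, and idempotency gives $D\Pnull(I-\Pnull)=0$, so $E(I-\Pnull) = \Delta_{\text{naive}}(I-\Pnull)$. This component is the out-of-null-space part, which \oons\ measures. Lemma~\ref{lem:merge} (with Lemma~2$'$) already ties it to $\Delta_{\text{naive}}K_0$ through the commutators $[\Pnull,M_i]$. $E\Pnull$ is the remainder inside $\col(\Pnull)$.

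Next I show that the cancellation fails, i.e.\ that $E\neq 0$ generically. Applying the MEMIT computation to $\Delta_i = R_iK_i^{\top}\Pnull(A_i\Pnull+\beta I)^{-1}$ leaves the product
\begin{equation*}
\Delta_i M_i = R_iK_i^{\top}\Pnull(A_i\Pnull+\beta I)^{-1}(C_i+\lambda C)(C_{\text{tot}}+\lambda C)^{-1}.
\end{equation*}
Here the middle factor $(A_i\Pnull+\beta I)^{-1}(C_i+\lambda C)$ is not the identity, because the normalizer contains $\Pnull$ and $\beta$ rather than $\lambda C$. By contrast, the centralized AlphaEdit solve has the single normalizer $(A_{\text{all}}\Pnull+\beta I)^{-1}$.

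The main obstacle is that ``does not cancel'' is an existence claim, not an identity. I would first try a small counterexample: $N=2$, $d_0=2$, $\Pnull$ the projector onto $e_1$, and keys with a cross term between $e_1$ and $e_2$. Computing $E(I-\Pnull)\neq0$ there would use the commutator mechanism of Lemma~\ref{lem:merge}. For $E\Pnull\neq0$, I would take the scalar case $\Pnull = I$ with $\beta\neq\lambda C$ to show that the mismatched normalizer alone distorts the in-null-space edit. A perturbation or polynomial-identity argument would then extend ``nonzero at one point'' to ``nonzero generically''.
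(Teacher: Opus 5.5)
Your proposal is correct. For everything the paper actually establishes, it is the paper's argument. Part (a) is the same cancellation $\Delta_i(C_i+\lambda C)=R_iK_i^{\top}$, followed by summation and stacking under a common state; the paper also reads it off as the $\Pnull=I$, $\Lambda=\lambda C$ instance of Theorem~\ref{thm:exact} in Proposition~\ref{prop:editormatched}. The split in (b) uses exactly the trailing $\Pnull$ of $\Delta_{\text{central}}^{\mathrm{Alpha}}$.

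Where you genuinely go further is the non-cancellation claim. The paper never proves $E\neq 0$. It supports it only structurally (the mismatched normalizer, the $(N-1)\lambda C$ surplus, exact vanishing at $N=1$) and empirically. Your witnesses turn ``does not cancel'' into a checkable existence statement. They also separate the commutator mechanism behind $E(I-\Pnull)$ from the regularizer mismatch behind $E\Pnull$.

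One step of your plan needs adjusting. A witness at $\Pnull=I$ cannot be carried to a rank-deficient projector by perturbation, because the rank of an orthogonal projector equals its trace and is therefore locally constant. Instead, fix $\Pnull$, run the polynomial-identity argument in the keys and residuals, and embed the scalar witness in $\col(\Pnull)$. Take $\Pnull=e_1e_1^{\top}$, $C=\mathrm{diag}(c_1,c_2)$, single keys $K_i=k_ie_1$, residuals $r_i$, and $S=\sum_j k_j^2$. Then $E(I-\Pnull)=0$ and
\[
E\Pnull=\sum_i r_ik_i\,\frac{(\lambda c_1-\beta)(S-k_i^2)}{(k_i^2+\beta)(S+\lambda c_1)(S+\beta)}\,e_1^{\top},
\]
which is nonzero for $N\ge 2$ and generic $r_i$ exactly when $\lambda c_1\neq\beta$.

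This witness isolates only the $\beta I$ versus $\lambda C$ mismatch. Cross-covariance is an independent source. With $\Pnull=e_1e_1^{\top}$, $\lambda C=\beta I=I$, $K_1=(1,1)^{\top}$ and $K_2=(1,0)^{\top}$, one still finds $E\Pnull=\tfrac{1}{30}(2r_2-r_1)e_1^{\top}$ and $E(I-\Pnull)=\tfrac{1}{10}(r_1-2r_2)e_2^{\top}$. So, as the paper asserts, no choice of $\lambda C$ restores cancellation in general, and both ingredients of the ``different structure'' in the statement reach the edit-success component.
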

Because this mismatch is structural rather than a matter of the value of $\lambda C$, re-tuning $\lambda C$ cannot restore the cancellation identity in general. Since $\lambda C$ enters each of the $N$ per-client terms but only once in the global normalizer, the summed normalizers exceed the latter by $(N-1)\lambda C$, amplifying the distortion as $N$ grows. At $N=1$ the weights reduce to the identity and the distortion is exactly zero. Varying the shared-covariance weight by a factor of two in either direction does not resolve the collapse in edit success or specificity (Table~\ref{tab:lamc} in Appendix~\ref{app:exp}).

Thus the observed edit-success drop and the null-space leak are not two independent bugs but the $\Pnull$- and $(I-\Pnull)$-components of a \emph{single} broken identity.

\subsection{Null-Space-Aligned Merging}
\label{sec:theorem}
\label{sec:measurement}
The commutativity of Lemma~\ref{lem:rightproj} supplies an AlphaEdit analogue of the cancellation identity, and hence a merge that is exact and that, by Proposition~\ref{prop:editormatched}, contains CollabEdit's merge as a special case.
\begin{theorem}[Exactness of null-space-aligned merging]
\label{thm:exact}
Suppose each client computes its AlphaEdit update $\Delta_i = R_i K_i^{\top}\Pnull(A_i \Pnull + \beta I)^{-1}$ with a single solve and shares the projected statistic $\Pnull A_i \Pnull$. Let
\begin{gather}
N_{\text{all}} := \Big(\sum_i \Pnull A_i \Pnull\Big) + \beta I, \label{eq:nall}\\
\Delta_{\text{proposed}} := \Big[\sum_i \Delta_i\,(\Pnull A_i \Pnull + \beta I)\Big] N_{\text{all}}^{-1}. \label{eq:proposed}
\end{gather}
Then $\Delta_{\text{proposed}} = \Delta_{\text{central}}^{\mathrm{Alpha}}$ exactly, where $\Delta_{\text{central}}^{\mathrm{Alpha}}$ is global AlphaEdit editing applied to the pooled edits. Moreover $\Delta_{\text{proposed}}(I-\Pnull) = 0$, so the merge introduces no out-of-null-space component. Preservation is exact whenever $\Pnull K_0 = 0$, and under an approximate null space ($\Fnorm{\Pnull K_0} \le \varepsilon$) the merged leakage coincides with that of centralized editing, containing the inherited term of Lemma~2$'$ only and no structural term.
\end{theorem}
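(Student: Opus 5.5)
The plan is to reduce everything to one commutation identity and then reuse it twice: once per client, to cancel the local normalizer, and once for the pooled problem, to match the centralized update. The identity is
\begin{equation*}
(\Pnull A \Pnull + \beta I)\,\Pnull \;=\; \Pnull A \Pnull + \beta \Pnull \;=\; \Pnull\,(A\Pnull + \beta I),
\end{equation*}
which uses only $\Pnull^2 = \Pnull$ and holds for any symmetric positive semidefinite $A$. This is the same mechanism behind Lemma~\ref{lem:rightproj}.

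\textbf{Step 1 (invertibility).} First I check that both $\Pnull A \Pnull + \beta I$ and $A\Pnull + \beta I$ are invertible for $\beta > 0$. The first is positive definite. For the second, the nonzero eigenvalues of $A\Pnull$ coincide with those of $\Pnull A \Pnull$ (write $A\Pnull = (A\Pnull)\Pnull$ and cycle the factors), so they are nonnegative and adding $\beta I$ shifts every eigenvalue to at least $\beta$. I expect this to be the only step needing care, since $A\Pnull$ is not symmetric. Multiplying the identity on the left by $(\Pnull A\Pnull+\beta I)^{-1}$ and on the right by $(A\Pnull+\beta I)^{-1}$ then gives
\begin{equation*}
\Pnull (A\Pnull+\beta I)^{-1} = (\Pnull A\Pnull+\beta I)^{-1}\Pnull .
\end{equation*}

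\textbf{Step 2 (per-client cancellation).} Applying Step~1 with $A = A_i$ gives $\Delta_i = R_i K_i^{\top} (\Pnull A_i \Pnull + \beta I)^{-1} \Pnull$. Since $\Pnull$ commutes with $\Pnull A_i \Pnull + \beta I$, this yields $\Delta_i(\Pnull A_i\Pnull + \beta I) = R_i K_i^{\top}\Pnull$. This is the AlphaEdit analogue of Lemma~\ref{lem:cancel}(a), and it is exactly why the server weights are chosen as they are. Summing over clients gives
\begin{equation*}
\Delta_{\text{proposed}} = \Big(\sum_i R_i K_i^{\top}\Big)\Pnull N_{\text{all}}^{-1}.
\end{equation*}

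\textbf{Step 3 (matching the centralized update).} For the central side I take the pooled quantities: $K$ and $R$ are the column-concatenations of the $K_i$ and $R_i$, and $A = \sum_i A_i$. All residuals are evaluated at the common base $W$, which is the one-shot, common-state idealization. Then $RK^{\top} = \sum_i R_i K_i^{\top}$ and $\Pnull A\Pnull + \beta I = N_{\text{all}}$. Step~1 applied to the pooled problem gives $\Delta^{\mathrm{Alpha}}_{\text{central}} = RK^{\top}\Pnull(A\Pnull+\beta I)^{-1} = RK^{\top}N_{\text{all}}^{-1}\Pnull$. Because $N_{\text{all}}$ commutes with $\Pnull$, so does $N_{\text{all}}^{-1}$, and this expression equals the one from Step~2.

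\textbf{Step 4 (remaining claims).} The representation $\Delta_{\text{proposed}} = \big(\sum_i R_i K_i^{\top}\big) N_{\text{all}}^{-1}\Pnull$ immediately gives $\Delta_{\text{proposed}}(I-\Pnull) = 0$. It also puts the merged update in the form $D\Pnull$ with a single $D = \big(\sum_i R_i K_i^{\top}\big)N_{\text{all}}^{-1}$. Hence $\Delta_{\text{proposed}}K_0 = D\,\Pnull K_0$, which vanishes when $\Pnull K_0 = 0$. Under $\Fnorm{\Pnull K_0}\le\varepsilon$ this leakage is literally the centralized leakage, by Step~3. Comparing with Lemma~2$'$: there are no merge weights failing to commute with $\Pnull$, so no commutator appears, and only the inherited-type term $D\,\Pnull K_0$ survives, with $\Fnorm{D\,\Pnull K_0} \le \lVert D\rVert_2\,\varepsilon$.
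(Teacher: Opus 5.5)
Your proof is correct and follows the paper's own argument: the commutation $\Pnull(A\Pnull+\beta I)^{-1} = (\Pnull A\Pnull+\beta I)^{-1}\Pnull$ from Lemma~\ref{lem:rightproj} cancels each client's normalizer to give $\Delta_i(\Pnull A_i\Pnull+\beta I) = R_iK_i^{\top}\Pnull$, the same commutation for $N_{\text{all}}$ moves the $\Pnull$ to the right, and the trailing $\Pnull$ gives the out-of-null-space and leakage claims. Your Step~3 is more explicit than the paper, which simply identifies $\big(\sum_i R_iK_i^{\top}\big)N_{\text{all}}^{-1}\Pnull$ with the centralized update; you instead define the pooled $K$, $R$, and $A=\sum_i A_i$ and re-apply Step~1 to the pooled solve (taking $A=\sum_i A_i$ silently uses the premise that the clients' prior-edit second moments sum to the global one).
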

\noindent\emph{Proof sketch.} By Lemma~\ref{lem:rightproj}, each per-client normalizer cancels against its own inverse once $\Pnull$ commutes through, and the same commutation applied to $N_{\text{all}}^{-1}$ assembles the pooled global solve. The full proof is in Appendix~\ref{app:proof-thm1}.

\begin{algorithm}[t]
\caption{Null-space-aligned merging, implementing the core merge rule \eqref{eq:proposed} (one-shot regime)}
\label{alg:nsalign}
\textbf{Pre-shared}: projector $\Pnull$, ridge $\beta$\\
\textbf{Received from each client} $i$: update $\Delta_i$, projected statistic $\Pnull A_i \Pnull$\\
\textbf{Output}: merged update $\Delta_{\text{proposed}}$
\begin{algorithmic}[1]
\STATE \textbf{each client} $i$ \textbf{in parallel}:
\STATE \quad compute $\Delta_i = R_i K_i^{\top}\Pnull(A_i \Pnull + \beta I)^{-1}$
\STATE \quad send $\Delta_i$ and the projected statistic $\Pnull A_i \Pnull$
\STATE \textbf{server}:
\STATE \quad $N_{\text{all}} \leftarrow \big(\sum_i \Pnull A_i \Pnull\big) + \beta I$
\STATE \quad $\Delta_{\text{proposed}} \leftarrow \big[\sum_i \Delta_i(\Pnull A_i \Pnull + \beta I)\big] N_{\text{all}}^{-1}$
\STATE \textbf{return} $\Delta_{\text{proposed}}$
\end{algorithmic}
\end{algorithm}

Algorithm~\ref{alg:nsalign} states the procedure of \eqref{eq:nall}--\eqref{eq:proposed}. The ridge $\beta$ must enter $N_{\text{all}}$ exactly once. Adding it per client would reproduce the same over-counting structure as the $(N-1)\lambda C$ surplus of Section~\ref{sec:lemma4}, with the ridge in place of the shared covariance. Appendix~\ref{app:exp} reports a unit test that detects exactly this misconfiguration.

\noindent\textbf{Scope of the exactness claim.} Theorem~\ref{thm:exact} holds under a one-shot idealization: one solve per client, a shared projector $\Pnull$, each client's accumulated prior-edit second moment $K_{p,i}K_{p,i}^{\top}$ summing to the global prior-edit second moment (in the single-round setting both are zero), and, crucially, all key statistics ($K_i$ and $A_i$) evaluated at a common model state (single-layer editing, or when inter-layer state divergence is negligible). Under multi-layer sequential editing, per-client edits to lower layers shift the hidden states from which higher-layer keys are computed, so the per-client $K_i$ and $A_i$ diverge from their global counterparts and exactness breaks; we quantify the resulting drift empirically, including a layer-wise operator-divergence diagnostic. No assumption on how the edits are distributed across clients enters the premises of Theorem~\ref{thm:exact}, so under the one-shot, common-state idealization the exact equivalence holds for any partition, including non-IID and imbalanced ones. The partition can affect the result only through violations of these premises, namely the inter-layer state divergence of multi-layer sequential editing and the batch-dependent key statistics of Section~\ref{sec:exp-diag}.

\begin{proposition}[Editor-matched merging and the $\Pnull = I$ instance]
\label{prop:editormatched}
Theorem~\ref{thm:exact} and Lemma~\ref{lem:rightproj} hold verbatim when the ridge $\beta I$ is replaced by any symmetric positive definite $\Lambda$ with $\Pnull\Lambda = \Lambda\Pnull$, since the proofs use only this commutation (Appendix~\ref{app:proof-thm1}). Take $\Pnull = I$ (no null-space constraint), $\Lambda = \lambda C$, and the single-round second moment $A_i = C_i$. Then the per-client update becomes the MEMIT update $R_i K_i^{\top}(C_i + \lambda C)^{-1}$, the normalizers become $N_i = C_i + \lambda C$ and $N_{\text{all}} = C_{\text{tot}} + \lambda C$, and \eqref{eq:proposed} reduces to $\sum_i \Delta_i (C_i + \lambda C)(C_{\text{tot}} + \lambda C)^{-1}$, which is exactly the merge of the released CollabEdit implementation (Appendix~\ref{app:impl}); the conclusion of Theorem~\ref{thm:exact} reduces to Lemma~\ref{lem:cancel}(a). CollabEdit is therefore the $\Pnull = I$ instance of the same editor-matched merge rule, and the naive combination fails precisely because it keeps the $\Pnull = I$ weights while the editor's normalizer has $\Pnull \neq I$.
\end{proposition}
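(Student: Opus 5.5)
The plan is to redo the proofs of Lemma~\ref{lem:rightproj} and Theorem~\ref{thm:exact} with $\beta I$ replaced by a symmetric positive definite $\Lambda$ commuting with $\Pnull$. At each step where $\beta I$ was used, I would check that only two facts about it were needed: invertibility of the normalizer, and the commutation $\Pnull\beta I = \beta I\Pnull$. The specialization $\Pnull = I$ will then be a direct substitution.

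\textbf{Generalized Lemma~\ref{lem:rightproj}.} Write $\Delta = R K^{\top}\Pnull (A\Pnull + \Lambda)^{-1}$. The goal is the push-through identity
\[
\Pnull (A\Pnull + \Lambda)^{-1} = (\Pnull A \Pnull + \Lambda)^{-1}\Pnull .
\]
Using $\Pnull\Lambda = \Lambda\Pnull$ and $\Pnull^2 = \Pnull$, we have $(\Pnull A\Pnull + \Lambda)\Pnull = \Pnull A\Pnull + \Lambda\Pnull = \Pnull(A\Pnull + \Lambda)$. Multiplying on the left by $(\Pnull A\Pnull+\Lambda)^{-1}$ and on the right by $(A\Pnull+\Lambda)^{-1}$ gives the identity.

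This step needs invertibility of both normalizers. $\Pnull A\Pnull + \Lambda$ is symmetric positive definite because $\Pnull A \Pnull \succeq 0$ and $\Lambda \succ 0$. For $A\Pnull + \Lambda$, I would decompose $\Lambda$ into its blocks on $\col(\Pnull)$ and its complement, which it preserves since it commutes with $\Pnull$. In that basis $A\Pnull+\Lambda$ is block lower-triangular, with diagonal blocks $\Pnull A\Pnull+\Lambda\Pnull$ and $\Lambda(I-\Pnull)$ restricted to the respective subspaces. Both blocks are positive definite there, so $A\Pnull + \Lambda$ is invertible. This invertibility check is the one place where the hypotheses on $\Lambda$ are used nontrivially, and I expect it to be the main obstacle.

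It follows that $\Delta_i = D_i\Pnull$ with $D_i = R_iK_i^{\top}(\Pnull A_i\Pnull+\Lambda)^{-1}$, i.e.\ $\Delta_i = R_iK_i^\top \Pnull N_i^{-1}$ where $N_i := \Pnull A_i\Pnull+\Lambda$. This last equality uses that $\Pnull$ commutes with $N_i$, hence with $N_i^{-1}$.

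\textbf{Generalized Theorem~\ref{thm:exact}.} From $\Delta_i N_i = R_iK_i^\top\Pnull$, summing over clients gives
\[
\Delta_{\text{proposed}} = \Big(\sum_i R_iK_i^\top\Big)\Pnull\, N_{\text{all}}^{-1}, \qquad N_{\text{all}} = \sum_i \Pnull A_i\Pnull + \Lambda .
\]
Now $\sum_i \Pnull A_i\Pnull = \Pnull A\Pnull$ for the pooled $A = \sum_i A_i$; this is where the one-shot premise on $K_{p,i}$ enters. The pooled central update is $RK^\top \Pnull (A\Pnull+\Lambda)^{-1}$, and by the push-through identity it equals $RK^\top(\Pnull A\Pnull+\Lambda)^{-1}\Pnull = RK^\top\Pnull N_{\text{all}}^{-1}$. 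Hence $\Delta_{\text{proposed}} = \Delta_{\text{central}}^{\mathrm{Alpha}}$.

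For the remaining claims, $N_{\text{all}}^{-1}$ commutes with $\Pnull$, so $\Delta_{\text{proposed}}(I-\Pnull) = (\cdots)N_{\text{all}}^{-1}\Pnull(I-\Pnull) = 0$. The statements about preservation and leakage then follow because the merged update equals the central one.

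\textbf{Specialization $\Pnull = I$.} Set $\Lambda = \lambda C$ and $A_i = C_i$. The per-client update becomes $R_iK_i^\top(C_i+\lambda C)^{-1}$, with $N_i = C_i+\lambda C$ and $N_{\text{all}} = C_{\text{tot}}+\lambda C$. Substituting into \eqref{eq:proposed} gives $\sum_i \Delta_i(C_i+\lambda C)(C_{\text{tot}}+\lambda C)^{-1}$, which matches the released CollabEdit merge weights of Section~\ref{sec:prelim}. The exactness conclusion becomes $\sum_i R_iK_i^\top(C_{\text{tot}}+\lambda C)^{-1}$, which is Lemma~\ref{lem:cancel}(a).

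The commutation hypothesis holds trivially when $\Pnull = I$, and $\lambda C \succ 0$ must be assumed, e.g.\ $C$ positive definite. Finally, the naive combination applies exactly these $\Pnull = I$ weights to updates whose normalizers $N_i$ involve $\Pnull\neq I$. The cancellation $\Delta_i N_i = R_iK_i^\top\Pnull$ therefore fails, matching the diagnosis of Lemma~\ref{lem:cancel}(b).
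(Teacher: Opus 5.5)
Your proposal is correct and follows essentially the same route as the paper's Remark in Appendix~\ref{app:proof-thm1}: the push-through identity derived from $\Pnull\Lambda=\Lambda\Pnull$, invertibility of $A\Pnull+\Lambda$ via its block lower-triangular form on $\operatorname{range}(\Pnull)\oplus\operatorname{range}(I-\Pnull)$, commutation of $N_{\text{all}}^{-1}$ with $\Pnull$, and direct substitution of $\Pnull=I$, $\Lambda=\lambda C$, $A_i=C_i$. You also make explicit two premises the paper relies on without restating them in the proposition: that $C$ must be positive definite so that $\lambda C\succ 0$, and that pooling requires $\sum_i K_{p,i}K_{p,i}^{\top}$ to equal the global prior second moment.
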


\noindent\textbf{Communication and privacy.} Beyond $\Delta_i$, clients share only $\Pnull A_i \Pnull$, a $d_0 \times d_0$ statistic of the same class as CollabEdit's key-covariance sharing; no raw keys are transmitted. In fp32 this statistic occupies $163.8$\,MB per edited layer for GPT-2 XL, so each client sends $819.2$\,MB per round over the five edited layers, and about $1.6$\,GB for Qwen2.5-1.5B (Appendix~\ref{app:exp}).

\noindent\textbf{From theory to measurement.} We close by fixing the correspondence between the algebra above and the reported metrics. The out-of-null-space fraction (\oons), the scale-normalized share of the update outside $\col(\Pnull)$ averaged over the five edited layers, is $\Fnorm{\Delta(I-\Pnull)}/\Fnorm{\Delta}$ per layer and corresponds to the $E(I-\Pnull)$ (structural) component of Lemma~\ref{lem:cancel}(b) and hence to loss of specificity, while edit-success loss corresponds to the $E\Pnull$ component. Being a ratio of Frobenius norms, \oons\ is a norm share and not a ratio of squared norms, and it is reported in percent throughout. We report \oons\ rather than an unnormalized leakage norm because the latter is dominated by the overall update magnitude rather than by structural leakage.

\section{Experiments}
\label{sec:exp}
The main results (Sections~\ref{sec:exp-naive}--\ref{sec:exp-zsre}) are measured in the single-round multi-layer regime, where the common-state premise of Theorem~\ref{thm:exact} does not hold exactly; Section~\ref{sec:exp-diag} isolates the consequence.

\subsection{Experimental Setup}
\label{sec:exp-setup}
\noindent\textbf{Internal-contrast principle.} Two baselines anchor every table: centralized AlphaEdit (the centralized reference), and the naive combination of per-client AlphaEdit with CollabEdit merging.

\noindent\textbf{Models and hyperparameters.} We evaluate two architecture families. GPT-2 XL (1.5B) is our primary model, editing the \texttt{c\_proj} weights of layers 13--17, with second-moment update weight $\lambda = 20000$ and AlphaEdit ridge $\beta = 10$. Qwen2.5-1.5B Base is a second family for assessing generality, editing \texttt{down\_proj} (input dimension $8960$) of layers 4--8, with $\lambda = 15000$ and $\beta = 10$ (remaining hyperparameters in Appendix~\ref{app:exp}). Both use null-space threshold $0.02$, following the released AlphaEdit implementation. Two symbols recur below. $\beta$ is the AlphaEdit ridge added to the normalizer in Section~\ref{sec:prelim}; the released code names the same parameter \texttt{L2}, and the two are one parameter. $C_1$ is CollabEdit's dynamic covariance term, an optional statistic accumulated across rounds; it is fixed to $0$ throughout this work (Appendix~\ref{app:exp}).

\noindent\textbf{Data and configuration.} Our primary dataset is the MultiCounterFact (MCF) benchmark: $5000$ edits ($10$ clients $\times\,500$) on both GPT-2 XL and Qwen, in the single-round configuration with $C_1 = 0$. To test generality across datasets we additionally use zsRE, reported separately. Seeds are the client case-assignment randomness (IID sharding); the naive combination, FedAlphaEdit, and the four ablation arms use three seeds (reported as mean\,$\pm$\,SD).

\noindent\textbf{Metrics.} We report edit success (ES), the fraction of edits for which the model produces the target answer, and neighborhood specificity (NS), the fraction of neighborhood prompts on which the model retains its pre-edit answer, from the standard MCF evaluation, together with GLUE over six tasks and \oons.

\noindent\textbf{The \oons\ floor.} Storing $\Pnull$ in fp32 limits its idempotency to ${\approx}3{\times}10^{-4}\%$ (that is, $3\times10^{-6}$ as a fraction), which we take as the \oons\ measurement floor, so a floor-level \oons\ means the constraint is met to within measurement precision rather than zero leakage.

\noindent\textbf{Reproducibility and environment.} Every experiment runs on a single NVIDIA GeForce RTX 3070 Ti (8\,GB) under a $7.4$\,GB hard VRAM cap. Full experimental details are in Appendix~\ref{app:exp}. Code for the proposed method, the centralized and naive baselines, the merge ablations, and the Theorem~\ref{thm:exact} unit test is available at \url{https://github.com/soutasuga/FedAlphaEdit}; the remaining components are available from the authors upon request.

\subsection{The Naive Combination Breaks Both Axes}
\label{sec:exp-naive}
Sections~\ref{sec:lemma2} and~\ref{sec:lemma4} predict that covariance-weighted merging of null-space-constrained edits degrades both preservation and editing, and Table~\ref{tab:g1} confirms both failures on GPT-2 XL, where the naive combination loses $17.39$ points of edit success and $4.19$ of neighborhood specificity relative to the centralized reference.

\noindent\textbf{Structural leakage.} Each client's update stays at the \oons\ floor, yet the merged update places $14.0$--$14.1\%$ of its norm share outside the null space, the empirical signature of the commutator term of Lemma~\ref{lem:merge}.

\noindent\textbf{Scaling.} The degradation amplifies monotonically with the client count $N$. Figure~\ref{fig:scaling} varies $N$ at a fixed total budget: NS drops by $1.15 \to 2.38 \to 4.19$ points relative to the centralized reference at $N=2,5,10$, edit success falls $93.80 \to 87.28 \to 79.69$, and \oons\ rises $5.25 \to 9.86 \to 14.04\%$; at $N=1$ the merge weights are the identity and the degradation vanishes, matching the $N=1$ boundary of Lemma~\ref{lem:cancel}. Along the round axis, where the total edit count co-varies with $R$ ($1000/3000/5000$), the degradation shows an increasing trend, pronounced at five rounds (NS drop $0.32/0.35/1.69$). \emph{Confound.} In this fixed-budget series the per-client edit count co-varies with $N$. A control series holding the per-client count at $500$ while $N$ grows shows the same monotone degradation on all three metrics (ES $99.83 \to 95.03 \to 79.69$, three-seed mean, $N{=}10$ shared between the two series, Table~\ref{tab:npc} in Appendix~\ref{app:exp}). The total edit count co-varies with $N$ in this control, in the opposite direction to the fixed-budget confound. Under either fixed condition the degradation persists with $N$. A centralized baseline editing the same case sets declines only from $99.80$ to $99.24$ to $97.08$ across the same totals (Table~\ref{tab:npc}). The naive gap at matched total edit counts is indistinguishable at $N=2$ (within seed variation) and grows to $4.21$ and $17.39$ points at $N=5,10$, beyond what sequential degradation alone accounts for.

\subsection{Null-Space-Aligned Merging Recovers Preservation, and Most of Editing}
\label{sec:exp-recover}

\begin{table*}[t]
\centering
\begin{tabular}{lccc}
\toprule
 & ES & NS & \oons \\
\midrule
Centralized AlphaEdit & $97.08$ & $64.43$ & ${\approx}3{\times}10^{-4}\%$ (floor) \\
Centralized MEMIT     & $90.80$ & $64.42$ & $26.6\%$ \\
CollabEdit (MEMIT)    & $90.75 \pm 0.03$ & $64.34 \pm 0.13$ & $26.5 \pm 0.05\%$ \\
FedLEKE (our reimplementation) & $87.74 \pm 0.51$ & $66.78 \pm 0.54$ & $23.98 \pm 0.02\%$ \\
\midrule
Naive (AlphaEdit + CollabEdit)             & $79.69 \pm 0.88$ & $60.24 \pm 0.11$ & $14.0\text{--}14.1\%$ \\
CollabEdit (MEMIT) + null-space projection & \ensuremath{82.45 \pm 0.06} & \ensuremath{71.90 \pm 0.03} & ${\approx}3{\times}10^{-4}\%$ (floor) \\
Simple average (AlphaEdit clients)         & \ensuremath{31.23 \pm 0.22} & \ensuremath{77.16 \pm 0.17} & ${\approx}3{\times}10^{-4}\%$ (floor) \\
Unnormalized sum (AlphaEdit clients, $\gamma=1$) & $51.17 \pm 0.27$ & $51.07 \pm 0.29$ & ${\approx}3{\times}10^{-4}\%$ (floor) \\
\midrule
FedAlphaEdit (ours)        & \ensuremath{94.67 \pm 0.18} & \ensuremath{66.38 \pm 0.19} & \ensuremath{3.21\times10^{-4}\%} (floor) \\
FedAlphaEdit (five rounds) & \ensuremath{95.37 \pm 0.08} & \ensuremath{65.06 \pm 0.04} & ${\approx}3{\times}10^{-4}\%$ (floor) \\
\bottomrule
\end{tabular}
\caption{FedAlphaEdit against the centralized reference and the naive combination on GPT-2 XL (MCF). All rows edit the same 5{,}000 facts, single-round except the five-round variant ($10 \times 100 \times 5$) and the FedLEKE reimplementation (ten time slots). Centralized rows are single deterministic runs; all client-partitioned rows are three-seed mean$\pm$SD. NS is bounded above by the unedited model, which by construction attains NS $=100$, so a higher NS at a collapsed ES indicates weaker editing rather than better preservation, as the simple-average row illustrates. The simple-average and unnormalized-sum rows are the scalar merges $\gamma\sum_i \Delta_i$ with $\gamma = 1/N$ and $\gamma = 1$.}
\label{tab:g2res}
\end{table*}

Theorem~\ref{thm:exact} predicts that sharing projected statistics restores global AlphaEdit editing exactly in the one-shot regime; this section measures how much of that exactness survives genuine multi-layer sequential editing. Table~\ref{tab:g2res} contrasts centralized AlphaEdit, the naive combination, and FedAlphaEdit (three seeds). FedAlphaEdit restores neighborhood specificity to the centralized level and \oons\ to the measurement floor, and closes most of the edit-success gap. It returns \oons\ to the floor ($\ensuremath{3.21\times10^{-4}\%}$) and lifts neighborhood specificity to $\ensuremath{66.38 \pm 0.19}$, which is $1.95$ points \emph{above} the centralized reference. The floor-level \oons\ is enforced by construction, since every merged term carries a trailing $\Pnull$, so the measurement verifies that the implementation realizes the identity under genuine sequential editing, while the behavioral evidence for preservation rests on neighborhood specificity, GLUE, and zsRE. Editing is largely but not fully recovered: the edit success of FedAlphaEdit, $\ensuremath{94.67 \pm 0.18}$, stands $2.41$ points below the centralized reference, against the naive gap of $17.39$ points. By contrast with the naive collapse, the proposed series declines by only $1.28$ points in edit success from $N=2$ to $N=10$ (the same $N$-axis confound of Section~\ref{sec:exp-naive} applies), against $14.11$ for the naive combination over that range in Figure~\ref{fig:scaling}, while its \oons\ stays at the measurement floor for every $N$.

The $2.41$-point residual is analyzed in Section~\ref{sec:exp-diag}.

\noindent\textbf{Generality (Qwen).} Both the degradation and its recovery reproduce on a second architecture family, Qwen2.5-1.5B (Table~\ref{tab:qwen}). At a scale matched to GPT-2 XL ($10 \times 500$), the naive combination degrades on both axes, with an edit-success gap of $4.75$ points and \oons\ at $14.5\%$, and the proposed method restores both axes toward the centralized level. The residual in edit success left by the proposed method is $1.88$ points, the same direction as the $2.41$-point residual on GPT-2 XL, consistent with a scale-driven origin.

\begin{table}[t]
\centering
\begin{tabular}{lcc}
\toprule
Qwen2.5-1.5B & ES & NS \\
\midrule
Centralized  & $98.28$ & $73.32$ \\
Naive        & $93.53 \pm 0.58$ & $71.84 \pm 0.75$ \\
FedAlphaEdit & $96.40 \pm 0.27$ & $74.32 \pm 0.16$ \\
\bottomrule
\end{tabular}
\caption{Generality on Qwen2.5-1.5B (MCF, 10 clients $\times$ 500 edits). Centralized is a single deterministic run; Naive and FedAlphaEdit are three-seed mean$\pm$SD.}
\label{tab:qwen}
\end{table}

\begin{table*}[t]
\centering
\begin{tabular}{lccc}
\toprule
zsRE & Centralized & Naive & FedAlphaEdit \\
\midrule
\multicolumn{4}{l}{\emph{GPT-2 XL}} \\
Rewrite acc      & $89.40$ & $66.43 \pm 1.27$ & $89.40 \pm 0.41$ \\
Paraphrase acc   & $81.20$ & $62.22 \pm 0.39$ & $81.60 \pm 0.16$ \\
Neighborhood acc & $24.98$ & $23.72 \pm 0.26$ & $25.12 \pm 0.11$ \\
\oons            & $3.17{\times}10^{-4}\%$ (floor) & $15.75 \pm 0.04\%$ & $3.17{\times}10^{-4}\%$ (floor) \\
\midrule
\multicolumn{4}{l}{\emph{Qwen2.5-1.5B}} \\
Rewrite acc      & $90.77$ & $77.54 \pm 0.88$ & $90.37 \pm 0.17$ \\
Paraphrase acc   & $85.14$ & $75.19 \pm 1.33$ & $85.88 \pm 0.07$ \\
Neighborhood acc & $41.26$ & $41.00 \pm 0.57$ & $42.31 \pm 0.58$ \\
\oons            & $3.83{\times}10^{-4}\%$ (floor) & $17.10 \pm 0.13\%$ & $3.83{\times}10^{-4}\%$ (floor) \\
\bottomrule
\end{tabular}
\caption{Dataset generality on zsRE (each 10 clients $\times$ 200 edits, single round). Naive and FedAlphaEdit are three-seed mean$\pm$SD; Centralized is a single run; \oons\ of order $10^{-4}\%$ is the measurement floor. zsRE accuracies are first-token exact-match ($\times 100$); \oons\ is a weight-space five-layer mean. No numerical comparison with MCF or between the two model families is intended.}
\label{tab:zsre}
\end{table*}

\subsection{Ablation: Both Constraints Are Necessary}
\label{sec:exp-ablation}
Recovery requires both the local null-space constraint and the aligned merge. Table~\ref{tab:g2res} reports the four ablation arms over three seeds. The merge-only and the two scalar-merge arms keep \oons\ at the floor by construction of their right-projected or scalar merges, so the discriminating axes are ES and NS. \emph{Merge-only, the ``CollabEdit (MEMIT) + null-space projection'' row of Table~\ref{tab:g2res}.} In this arm each client solves with MEMIT without a local projection, and the server right-multiplies the covariance-weighted merge by $\Pnull$. Contrary to our expectation that this arm would preserve less than FedAlphaEdit, the outcome is an edit--specificity trade-off, with the merge-only arm above FedAlphaEdit on neighborhood specificity but $12.22$ points below on edit success, so a merge-side constraint alone suffices for specificity but not for editing, and the necessity of both constraints shows in the simultaneous attainment of the two, which only FedAlphaEdit achieves. \emph{Scalar merges (Proposition~\ref{prop:degenerate}).} Every scalar merge $\gamma\sum_i \Delta_i$ keeps \oons\ at the floor by construction, but no $\gamma$ recovers editing: the $1/N$ average collapses edit success to $31.23$, the unnormalized sum ($\gamma=1$) to $51.17$ with neighborhood specificity falling to $51.07$, and the intermediate $\gamma=0.5$ reaches only $55.16$ (Table~\ref{tab:scalar} in Appendix~\ref{app:exp}). Edit success stays at least $39$ points below FedAlphaEdit at every $\gamma$, so the recovery rests on the matrix-valued weights of Theorem~\ref{thm:exact} rather than on the scalar scale of the merged update. \emph{Five rounds.} As predicted, FedAlphaEdit holds its single-round level across rounds, in contrast with the naive round-axis amplification. In the multi-round regime each client accumulates only its own prior edits in $K_{p,i}$, so the bookkeeping premise of Theorem~\ref{thm:exact}, client prior second moments summing to the global one, no longer holds. The five-round arm is therefore a robustness check outside the guarantee rather than an instance of it.

\subsection{Generality beyond MCF: zsRE}
\label{sec:exp-zsre}
To confirm the phenomenon and its repair are not specific to MCF, we repeated the contrast on zsRE under direction predictions committed before the runs. On both GPT-2 XL and Qwen2.5-1.5B the naive combination reproduced the structural leak and collapse while null-space-aligned merging restored every metric to, or within 0.4 points of, the centralized level (Table~\ref{tab:zsre}). All zsRE evaluations use a common fp32 evaluation path within each model family (Appendix~\ref{app:exp}). Differences below the measurement resolution are reported as unresolvable, with the near-tie calibration behind this bound defined in Appendix~\ref{app:exp}. On Qwen$\times$zsRE the naive combination's behavioral neighborhood degradation is not resolvable ($-0.26$ points, below both the seed SD and the near-tie sensitivity band), although the structural leak itself reproduces clearly.

\subsection{Diagnostics}
\label{sec:exp-diag}

\noindent\textbf{Residual to the centralized reference.} The $2.41$-point residual is consistent across all three seeds ($94.88$, $94.54$, $94.60$; SD $0.18$). We attribute this residual to the layer-wise state divergence that the one-shot idealization excludes, with supporting diagnostics in Appendix~\ref{app:exp}. A single-layer control isolates the cause. Editing layer 17 alone, so that the common-state premise is met up to the batch dependence of the bf16 forward pass, the relative weight-space deviation falls from $0.4547$ to $0.0176$, and the behavioral metrics coincide within the run-to-run variation of the centralized reference (edit success within $0.10$ points, NS identical). The remaining deviation of $0.0176$ is not located in the merge identity but in the operator inputs: the relative difference between the merged normalizer $N_{\text{all}}$ and the normalizer $N_G$ of the pooled centralized solve, $\Fnorm{N_{\text{all}} - N_G}/\Fnorm{N_G}$, is $0.0063$ in this control, and since $N_{\text{all}} - N_G = \Pnull(\sum_i A_i - A_G)\Pnull$ with the ridge cancelling, this difference lives entirely in the key statistics. Both arms of the control use the same edit requests, the same fixed context templates, and the same cached target hidden states, so the only difference between them is the batch composition under which the keys are computed, and the batch dependence of the key statistics in the bf16 forward pass is where the residual resides.

A natural concern is that exceeding the centralized reference on neighborhood specificity while falling short on edit success is the signature of an edit that writes less. Four observations argue otherwise. First, FedAlphaEdit keeps \oons\ at the measurement floor, so preservation is upheld by the null-space constraint itself, and the $1.95$-point NS gain over centralized AlphaEdit is not an artifact of unmeasured leakage. Second, genuine under-editing is visible in the simple-average ablation (Section~\ref{sec:exp-ablation}, Table~\ref{tab:g2res}), whose edit success collapses at an elevated neighborhood specificity. By contrast, FedAlphaEdit operates in a qualitatively different regime in which the editing function is substantially retained. Third, the residual is not a deliberate reduction of edit magnitude. Its error decomposes (Appendix~\ref{app:exp}) entirely inside $\operatorname{range}(\Pnull)$, the subspace the editor writes into, with the orthogonal component at the \oons\ floor, so the $2.41$-point shortfall is a directional distortion introduced by layer-wise state divergence rather than a uniform shrinkage of the edits. Fourth, as a supportive, non-binding observation, general ability points the same way. Under globally weakened editing GLUE would drift toward the base level, as it does for the naive combination (Section~\ref{sec:exp-naive}), whereas FedAlphaEdit reproduces the centralized level ($0.3732$ versus $0.3728$, Table~\ref{tab:glue} in Appendix~\ref{app:exp}). These observations are inconsistent with uniform shrinkage and with unmeasured leakage. Excluding a per-edit trade-off within the projected subspace entirely would require edit-level analysis, which we leave open.

\noindent\textbf{Prior-method reference (MEMIT).} The MEMIT control and Centralized MEMIT rows of Table~\ref{tab:g2res} agree to within seed-level variation, consistent with the cancellation identity of Lemma~\ref{lem:cancel}(a). The FedLEKE row of Table~\ref{tab:g2res} is our reimplementation from the paper description (gate threshold estimated by us, and not evaluated on GPT-2 XL or MCF by the original) and is a reference under our internal-contrast pipeline rather than a claim about the original system; its \oons\ of $23.98\%$ reflects the absence of a null-space constraint rather than the breaking of one.

\section{Limitations}
\label{sec:limitations}
\begin{itemize}
\item \textbf{Idealization of Theorem~\ref{thm:exact}.} Exactness assumes a one-shot solve and all key statistics at a common model state; under multi-layer sequential editing the identity holds only approximately, and we disclose the measured drift (Appendix~\ref{app:exp}). Even in the single-layer control, where the common-state premise is met up to the batch dependence of the bf16 forward pass, a weight-space deviation of $0.0176$ remains and is located in the operator inputs rather than in the merge identity (Section~\ref{sec:exp-diag}).
\item \textbf{Edit-success residual on GPT-2 XL.} The proposed method leaves a $2.41$-point residual to the centralized reference in edit success. We attribute this to the layer-wise state divergence that Theorem~\ref{thm:exact} explicitly excludes, and a protocol that evaluates key statistics at a common model state is left to future work. At the matched scale ($10 \times 500$), both the naive amplification and a residual of the proposed method reproduce on Qwen, which partially disentangles the confound with per-client edit count, while confounds with model scale and edit-layer depth remain.
\item \textbf{Scale.} We study two 1.5B-parameter families under a single-GPU budget; larger models, non-IID sharding for the proposed method (Theorem~\ref{thm:exact} itself is partition-independent, Section~\ref{sec:theorem}), imbalanced or conflicting edits across clients, and client dropout are left to future work. The shared statistic is a dense $d_0 \times d_0$ matrix per edited layer, a substantial communication footprint (quantified in Appendix~\ref{app:exp}), and low-rank or sketched compression is untested. The magnitude of the naive collapse itself varies across architectures ($17.39$ points on GPT-2 XL versus $4.75$ on Qwen), and isolating the contributions of layer depth, key dimension, and $\lambda$ to this difference is left to future work.
\item \textbf{Shared projector.} Theorem~\ref{thm:exact} assumes a single projector $\Pnull$ shared by all clients, constructed here from common Wikipedia statistics. This is natural when clients hold similar preservation requirements, as with several hospitals maintaining one clinical model, but how clients with heterogeneous preservation requirements would agree on a shared projector, and whether a projector built from public statistics covers each client's private requirements, are left open.
\item \textbf{Privacy.} Sharing the projected covariance statistic $\Pnull A_i \Pnull$ avoids transmitting raw keys but is not by itself a privacy guarantee. When a client holds few edits the statistic is low-rank, and the projected key directions can be recovered from it up to sign, with $\Delta_i$ potentially revealing further information about the residuals. The protocol further assumes an honest-but-curious server and defines no formal threat model. The server needs only two sums, $\sum_i \Delta_i(\Pnull A_i \Pnull + \beta I)$, which equals $\sum_i R_i K_i^{\top}\Pnull$ by Lemma~\ref{lem:rightproj}, and $\sum_i \Pnull A_i \Pnull$, and each summand is computable locally by its client, so the dense-matrix form is in principle compatible with additive secure aggregation, although we have neither implemented nor evaluated it. The factorized transmission of $\Pnull K_i$ described in Appendix~\ref{app:exp} lacks this property, because the server must form $\Pnull K_i K_i^{\top}\Pnull$ from each client's individual factor. Reconstruction-attack evaluation, secure aggregation, and differentially private transmission are left to future work.
\end{itemize}

\section{Conclusion}
\label{sec:conclusion}
We showed that naively merging null-space-constrained edits from independent clients fails structurally, with editing and preservation collapsing at once because the covariance-weighted merge is incompatible with AlphaEdit's projected normalizer. Guided by this analysis, FedAlphaEdit has clients share projected statistics and provably recovers global AlphaEdit editing in the one-shot regime; under genuine sequential editing the recovery is approximate, with the drift disclosed. Empirically it restores neighborhood specificity to the centralized level and \oons\ to the measurement floor, and closes most of the edit-success gap on both architecture families, with the small remaining residual attributed to the layer-wise state divergence that the idealization of Theorem~\ref{thm:exact} excludes.

\bibliography{references}

\begin{thebibliography}{14}
\providecommand{\natexlab}[1]{#1}

\bibitem[{Eskandar et~al.(2026)Eskandar, Sirera~Perell\'{o}, Ioannidis, and
  Dy}]{eskandar2026loki}
Eskandar, M.; Sirera~Perell\'{o}, M.; Ioannidis, S.; and Dy, J. 2026.
\newblock {LOKI}: Memory-Free Null-Space Constrained Lifelong Knowledge
  Editing.
\newblock arXiv:2606.19679.

\bibitem[{Fang et~al.(2025)Fang, Jiang, Wang, Ma, Shi, Wang, He, and
  Chua}]{fang2025alphaedit}
Fang, J.; Jiang, H.; Wang, K.; Ma, Y.; Shi, J.; Wang, X.; He, X.; and Chua,
  T.-S. 2025.
\newblock {AlphaEdit}: Null-Space Constrained Knowledge Editing for Language
  Models.
\newblock In \emph{International Conference on Learning Representations
  (ICLR)}.

\bibitem[{Ilharco et~al.(2023)Ilharco, Ribeiro, Wortsman, Gururangan, Schmidt,
  Hajishirzi, and Farhadi}]{ilharco2023task}
Ilharco, G.; Ribeiro, M.~T.; Wortsman, M.; Gururangan, S.; Schmidt, L.;
  Hajishirzi, H.; and Farhadi, A. 2023.
\newblock Editing Models with Task Arithmetic.
\newblock In \emph{International Conference on Learning Representations
  (ICLR)}.

\bibitem[{Liu, Liu, and Li(2026)}]{liu2026betaedit}
Liu, B.; Liu, W.; and Li, Y. 2026.
\newblock {BetaEdit}: Null-Space Constrained Sequential Model Editing.
\newblock In \emph{Proceedings of the International Joint Conference on
  Artificial Intelligence (IJCAI)}.

\bibitem[{Lyu et~al.(2026)Lyu, Gu, Wang, Huang, Luan, Cui, Chang, and
  Lu}]{lyu2026evoedit}
Lyu, S.; Gu, Y.; Wang, X.; Huang, J.; Luan, S.; Cui, Y.; Chang, X.-W.; and Lu,
  P. 2026.
\newblock {EvoEdit}: Evolving Null-space Alignment for Robust and Efficient
  Knowledge Editing.
\newblock In \emph{Findings of the Association for Computational Linguistics:
  ACL 2026}.

\bibitem[{McMahan et~al.(2017)McMahan, Moore, Ramage, Hampson, and {Ag{\"u}era
  y Arcas}}]{mcmahan2017fedavg}
McMahan, B.; Moore, E.; Ramage, D.; Hampson, S.; and {Ag{\"u}era y Arcas}, B.
  2017.
\newblock Communication-Efficient Learning of Deep Networks from Decentralized
  Data.
\newblock In \emph{Proceedings of the International Conference on Artificial
  Intelligence and Statistics (AISTATS)}, 1273--1282.

\bibitem[{Meng et~al.(2022)Meng, Bau, Andonian, and Belinkov}]{meng2022rome}
Meng, K.; Bau, D.; Andonian, A.; and Belinkov, Y. 2022.
\newblock Locating and Editing Factual Associations in {GPT}.
\newblock In \emph{Advances in Neural Information Processing Systems
  (NeurIPS)}, volume~35, 17359--17372.

\bibitem[{Meng et~al.(2023)Meng, Sharma, Andonian, Belinkov, and
  Bau}]{meng2023memit}
Meng, K.; Sharma, A.~S.; Andonian, A.; Belinkov, Y.; and Bau, D. 2023.
\newblock Mass-Editing Memory in a Transformer.
\newblock In \emph{International Conference on Learning Representations
  (ICLR)}.

\bibitem[{Qiu et~al.(2026)Qiu, Wang, Cao, Zhang, Su, Xu, Meng, Xu, Wu, and
  Li}]{qiu2026nufilt}
Qiu, Z.; Wang, L.; Cao, Y.; Zhang, R.; Su, B.; Xu, Y.; Meng, F.; Xu, L.; Wu,
  Q.; and Li, H. 2026.
\newblock Null-Space Filtering for Data-Free Continual Model Merging:
  Preserving Stability, Promoting Plasticity.
\newblock In \emph{International Conference on Learning Representations
  (ICLR)}.

\bibitem[{Qiu et~al.(2025)Qiu, Xu, He, Meng, Xu, Wu, and Li}]{qiu2025mingle}
Qiu, Z.; Xu, Y.; He, C.; Meng, F.; Xu, L.; Wu, Q.; and Li, H. 2025.
\newblock {MINGLE}: Mixture of Null-Space Gated Low-Rank Experts for Test-Time
  Continual Model Merging.
\newblock In \emph{Advances in Neural Information Processing Systems
  (NeurIPS)}.

\bibitem[{Yadav et~al.(2023)Yadav, Tam, Choshen, Raffel, and
  Bansal}]{yadav2023ties}
Yadav, P.; Tam, D.; Choshen, L.; Raffel, C.; and Bansal, M. 2023.
\newblock {TIES-Merging}: Resolving Interference When Merging Models.
\newblock In \emph{Advances in Neural Information Processing Systems
  (NeurIPS)}, volume~36, 7093--7115.

\bibitem[{Yao et~al.(2023)Yao, Wang, Tian, Cheng, Li, Deng, Chen, and
  Zhang}]{yao2023editing}
Yao, Y.; Wang, P.; Tian, B.; Cheng, S.; Li, Z.; Deng, S.; Chen, H.; and Zhang,
  N. 2023.
\newblock Editing Large Language Models: Problems, Methods, and Opportunities.
\newblock In \emph{Proceedings of the 2023 Conference on Empirical Methods in
  Natural Language Processing}, 10222--10240.

\bibitem[{Zhao et~al.(2025)Zhao, Xu, Li, Wei, and Zhong}]{zhao2025fleke}
Zhao, Z.; Xu, G.; Li, X.; Wei, K.; and Zhong, J. 2025.
\newblock {FedLEKE}: Federated Locate-then-Edit Knowledge Editing.
\newblock In \emph{Findings of the Association for Computational Linguistics:
  ACL 2025}, 14247--14258.

\bibitem[{Zheng et~al.(2025)Zheng, Zhang, Du, Zhang, Yin, and
  Lin}]{zheng2025collabedit}
Zheng, J.; Zhang, J.; Du, T.; Zhang, X.; Yin, J.; and Lin, T. 2025.
\newblock {CollabEdit}: Towards Non-destructive Collaborative Knowledge
  Editing.
\newblock In \emph{International Conference on Learning Representations
  (ICLR)}.

\end{thebibliography}

\clearpage
\renewcommand{\thesection}{\Alph{section}}
\setcounter{section}{0}
\setcounter{secnumdepth}{1}

\section{Proofs}
\label{app:proof-thm1}
\begin{proof}[Proof of Lemma~\ref{lem:rightproj}]
Since $\Pnull^2 = \Pnull$ and $\Pnull^{\top} = \Pnull$, we have $\Pnull(A\Pnull + \beta I) = \Pnull A \Pnull + \beta \Pnull = (\Pnull A \Pnull + \beta I)\Pnull$. Both $A\Pnull + \beta I$ and $\Pnull A \Pnull + \beta I$ are invertible (the nonzero spectra of $A\Pnull$ and $\Pnull A \Pnull$ coincide, so both shifts by $\beta I$ are invertible for $\beta > 0$); multiplying on the right by $(A\Pnull+\beta I)^{-1}$ and on the left by $(\Pnull A \Pnull + \beta I)^{-1}$ gives $\Pnull(A\Pnull+\beta I)^{-1} = (\Pnull A \Pnull + \beta I)^{-1}\Pnull$. Hence $\Delta = R K^{\top}\Pnull(A\Pnull+\beta I)^{-1} = [R K^{\top}(\Pnull A \Pnull + \beta I)^{-1}]\,\Pnull =: D\Pnull$.
\end{proof}

\begin{proof}[Proof of Theorem~\ref{thm:exact}]
By Lemma~\ref{lem:rightproj}, $\Delta_i(\Pnull A_i \Pnull + \beta I) = R_i K_i^{\top}\Pnull$ (commuting $\Pnull$ through the inverse, the inverse and the normalizer cancel). Summing, $\sum_i \Delta_i(\Pnull A_i \Pnull + \beta I) = \sum_i R_i K_i^{\top}\Pnull$. Since $N_{\text{all}}$ has the form $\Pnull X \Pnull + \beta I$, it commutes with $\Pnull$ as in Lemma~\ref{lem:rightproj}, so $N_{\text{all}}^{-1}\Pnull = \Pnull N_{\text{all}}^{-1}$, and
\begin{align*}
\Delta_{\text{proposed}} &= \Big(\sum_i R_i K_i^{\top}\Big)\Pnull\, N_{\text{all}}^{-1} \\
&= \Big(\sum_i R_i K_i^{\top}\Big) N_{\text{all}}^{-1}\Pnull = \Delta_{\text{central}}^{\mathrm{Alpha}},
\end{align*}
the last equality being global AlphaEdit editing with pooled residual $\sum_i R_i K_i^{\top}$ and normalizer $N_{\text{all}}$. Right-multiplication by $(I-\Pnull)$ annihilates the trailing $\Pnull$, giving exact preservation. Since the identity $\Delta_{\text{proposed}} = \Delta_{\text{central}}^{\mathrm{Alpha}}$ holds irrespective of the value of $\Pnull K_0$, the merged leakage $\Delta_{\text{proposed}} K_0$ coincides with the centralized leakage for every $\varepsilon$, containing no structural term.
\end{proof}

\noindent\textbf{Remark (general regularizer, Proposition~\ref{prop:editormatched}).} If $\beta I$ is replaced by a symmetric positive definite $\Lambda$ with $\Pnull\Lambda = \Lambda\Pnull$, the identity $\Pnull(A\Pnull+\Lambda) = (\Pnull A\Pnull+\Lambda)\Pnull$ in the proof of Lemma~\ref{lem:rightproj} follows from this commutation alone. Invertibility follows from the decomposition $\mathbb{R}^{d_0} = \operatorname{range}(\Pnull) \oplus \operatorname{range}(I-\Pnull)$. Under it $\Lambda$ is block diagonal and $A\Pnull+\Lambda$ is block lower triangular, with diagonal blocks $\Pnull A\Pnull + \Pnull\Lambda\Pnull$ and $(I-\Pnull)\Lambda(I-\Pnull)$, restricted to the respective subspaces and both positive definite. The same commutation gives $N_{\text{all}}^{-1}\Pnull = \Pnull N_{\text{all}}^{-1}$ in the proof of Theorem~\ref{thm:exact}, so both proofs carry over verbatim.

\section{Released Implementation Form and Notation Reconciliation}
\label{app:impl}
For the theory-to-implementation cross-check, the per-layer expression synthesized by the released code (CPU, \texttt{float64}) is
\begin{multline}
\Delta_{\text{naive}} = \Big[\textstyle\sum_i \mathrm{resid}_i\,(S_i^{-1}\Pnull K_i)^{\top}(K_i K_i^{\top} + \lambda C)\Big]\\
\cdot(\lambda C + \textstyle\sum_j K_j K_j^{\top})^{-1},
\label{eq:released}
\end{multline}
where $S_i := \Pnull(K_i K_i^{\top} + C_p^{(i)}) + \beta I$ is the released \emph{solve operator} ($\beta = \text{L2} = 10$ the AlphaEdit ridge; $C_p^{(i)} = 0$ in the single-round setting), $\lambda$ the second-moment update weight ($\lambda = 20000$ for GPT-2 XL and $\lambda = 15000$ for Qwen2.5-1.5B, distinct from $\beta$), and $C$ the shared Wikipedia second-moment $\mathbb{E}[kk^{\top}]$ (sample-normalized). The correspondence to equation~\eqref{eq:merge-abstract} is $\Delta_i = \mathrm{resid}_i(S_i^{-1}\Pnull K_i)^{\top}$ and $C_i = K_i K_i^{\top}$; the released weighting uses the regularized numerator $C_i + \lambda C$ and denominator $\lambda C + \sum_j K_j K_j^{\top} = C_{\text{tot}} + \lambda C$, so $\sum_i (C_i + \lambda C) = C_{\text{tot}} + N\lambda C$ over-counts $\lambda C$ by $(N-1)\lambda C$ relative to the denominator (Section~\ref{sec:lemma4}).

\paragraph{Notation clash (important).} The released code's solve operator $S_i = \Pnull(K_iK_i^{\top}+C_p^{(i)}) + \beta I$ is \emph{one-sided}; it corresponds to the paper's normalizer $N_i = \Pnull A_i \Pnull + \beta I$ of Section~\ref{sec:prelim}, \emph{not} to the paper's symmetric second-moment matrix $A_i = K_{p,i}K_{p,i}^{\top}+K_iK_i^{\top}$. In particular, the variable named ``$A_i$'' in the released code is $S_i$ here, a different object from the paper's $A_i$.

\paragraph{Equivalence (one line).} With $A_i = K_{p,i}K_{p,i}^{\top}+K_iK_i^{\top}$ we have $S_i^{\top} = A_i\Pnull + \beta I$, so $\Delta_i = \mathrm{resid}_i(S_i^{-1}\Pnull K_i)^{\top} = \mathrm{resid}_i K_i^{\top}\Pnull(A_i\Pnull+\beta I)^{-1}$; by Lemma~\ref{lem:rightproj}, $\Pnull(A_i\Pnull+\beta I)^{-1} = (\Pnull A_i\Pnull+\beta I)^{-1}\Pnull = N_i^{-1}\Pnull$, hence $\Delta_i = \mathrm{resid}_i K_i^{\top}N_i^{-1}\Pnull \in \operatorname{range}(\Pnull)$. The one-sided operator $S_i$ and the sandwiched normalizer $N_i$ therefore yield the identical update \emph{precisely because} the update is right-projected onto $\operatorname{range}(\Pnull)$ (the trailing $\Pnull$).

\section{Experimental Details}
\label{app:exp}

\begin{table*}[t]
\centering
\begin{tabular}{lcccc}
\toprule
Per-client $500$ & ES & Centralized ES & NS & \oons \\
\midrule
$N=2$ ($1000$)  & $99.83 \pm 0.12$ & $99.80$ & $68.37 \pm 0.07$ & $4.05\%$ \\
$N=5$ ($2500$)  & $95.03 \pm 0.15$ & $99.24$ & $62.80 \pm 0.17$ & $9.52\%$ \\
$N=10$ ($5000$) & $79.69 \pm 0.88$ & $97.08$ & $60.24 \pm 0.11$ & $14.04\%$ \\
\bottomrule
\end{tabular}
\caption{Per-client-fixed control series (naive combination, GPT-2 XL, per-client $500$); three-seed mean$\pm$SD, with \oons\ standard deviations $0.027$, $0.057$, and $0.022$ percentage points. The $N{=}10$ row reuses the main naive runs. Centralized entries are deterministic single runs editing the same case sets as the corresponding rows; the $N{=}10$ centralized entry reuses the main centralized run. Total edits co-vary with $N$; cross-column comparisons are within the same row (matched total edit count) only.}
\label{tab:npc}
\end{table*}

\begin{itemize}
\item \textbf{Reported records.} Per-seed edit-success values for FedAlphaEdit appear in Section~\ref{sec:exp-diag}, three-seed conditions are reported as mean\,$\pm$\,SD throughout, and the GLUE summary is Table~\ref{tab:glue} (the six tasks, ported from the AlphaEdit evaluation harness, are SST-2, MRPC, CoLA, RTE, MMLU, and a binary NLI set, $n=100$ per task, zero-shot). Full per-seed records for every run are retained with the experimental logs.
\item \textbf{Shared covariance.} We keep $C_1 = 0$, the released default, and do not implement Dynamic Covariance, because no working public implementation exists and a bespoke one would introduce an unverified baseline rather than a faithful reproduction.
\item \textbf{Measurement-uncertainty calibration.} The run-to-run edit-success variation of the centralized reference is $0.12$ points (NS: $0.02$ points), and the cross-process floor of the operator-divergence metric (from seed-free template generation) is $9$--$12\%$; these calibrate the drift diagnostics below and the residual discussion of Section~\ref{sec:exp-diag} and are not used in any claim.
\item \textbf{Qwen environment.} The mom2 statistics source (Wikipedia 20220301.en) and other Qwen-specific settings are held common across all Qwen runs so that internal contrasts remain valid. Editing uses a $v$-loss layer of $27$ and $v$-learning-rate $0.1$ over $25$ steps.
\item \textbf{Evaluation pipeline.} The pipeline exhibits a systematic $+1.8$ to $+2.0$ point paraphrase shift, common to all methods and excluded as a bf16 artifact; it motivates the internal-contrast principle and is never compared across pipelines.
\item \textbf{Measurement-system sanity.} Under an unprojected MEMIT editor, \oons\ is $23$--$31\%$ (a 20-edit smoke sample, a measurement-system sanity check, not a scaled comparison), about $10^{5}\times$ the projected editor's floor, confirming that the metric resolves the out-of-null-space norm share.
\item \textbf{Unit test of Theorem~\ref{thm:exact}.} One-shot single-client exactness ($\ensuremath{3.4\times10^{-15}}$) and the two-client divergence ($\ensuremath{2.4\%}$, from a $\ensuremath{0.6\%}$ operator difference) provide direct evidence of the layer-direction term discussed under Drift diagnostics below. As a negative control, adding the ridge $\beta I$ once per client instead of once to $N_{\text{all}}$ (the $(N{-}1)$-fold over-counting that Theorem~\ref{thm:exact} forbids) diverges by $\ensuremath{11.8\%}$, far above the correctly constructed two-client value ($\ensuremath{2.4\%}$). The test therefore detects a $\beta$-mismatch, a violation of the theorem's precondition, rather than passing trivially.
\item \textbf{Communication footprint.} Per edited layer the shared statistic $\Pnull A_i \Pnull$ is a dense $d_0 \times d_0$ matrix, which in fp32 ($4$ bytes per element) is $163.8$\,MB for GPT-2 XL ($d_0{=}6400$) and $321.1$\,MB for Qwen2.5-1.5B ($d_0{=}8960$), i.e.\ $819.2$\,MB and about $1.6$\,GB per client per round over the five edited layers. This matches the dimension of CollabEdit's key-covariance statistic. In the single-round setting the statistic factorizes: $\operatorname{rank}(\Pnull A_i \Pnull) \le n_i$, so transmitting the projected keys $\Pnull K_i$ ($d_0 \times n_i$) is mathematically equivalent and, for $n_i = 500$, amounts to $d_0 \times n_i$ fp32 values, or $12.8$\,MB per layer for GPT-2 XL (an arithmetic value, not a measurement). This variant, however, transmits projected keys directly and therefore trades against the reconstruction risk noted under Privacy in Section~\ref{sec:limitations}; we record it not as a communication-reduction claim but as the fact that the communication and privacy limitations share the same degree of freedom.
\item \textbf{Software environment.} The software stack is Driver 591.86, CUDA (driver) 13.1, and PyTorch 2.6.0+cu118 (runtime CUDA 11.8).
\item \textbf{Seeds and sharding.} All three-seed conditions use shard seeds $1$, $2$, $3$, which control IID assignment of cases to clients under a fixed case universe (MCF first $5000$ or $2000$ records). Centralized runs are deterministic (no sharding).
\item \textbf{Hyperparameter provenance.} The editing hyperparameters ($\lambda$, $\beta$, threshold, edited layers, $v$-learning-rate, and optimization steps) are inherited from the released AlphaEdit and CollabEdit configurations without independent tuning. The sensitivity of the shared-covariance weight $\lambda$ is examined at three levels ($\{10000, 20000, 40000\}$) in Table~\ref{tab:lamc}.
\end{itemize}

\noindent\textbf{Non-IID sharding (naive combination).} Under relation-based non-IID sharding of the same $5000$ edits ($10 \times 500$, seed 1), the naive combination reaches ES $80.06$, NS $61.94$, and \oons\ $12.0\%$, so the naive collapse persists at a magnitude comparable to IID sharding (Table~\ref{tab:g1}); the proposed method has not been measured under non-IID sharding.

\paragraph{Drift diagnostics.}
\label{app:exp-diag}
Theorem~\ref{thm:exact} is exact only when all key statistics are evaluated at a common model state. Under multi-layer sequential editing the measured \emph{sequential drift} is $\ensuremath{0.4547}$ (three seeds, five-layer mean), localized by a \emph{layer-wise operator divergence}, both increasing monotonically across the edited layers and peaking at the last (Figure~\ref{fig:layerdiag}). A single-client, one-shot unit test reproduces the exactness identity to machine precision, while its two-client version already exhibits a divergence traceable to an operator difference, evidence in the direction of the state-divergence attribution. The single-layer control of Section~\ref{sec:exp-diag} bounds how much of this drift is behaviorally consequential. With sequential multi-layer evaluation removed, the deviation drops to $0.0176$ (a factor of $26$) and the behavioral gap closes, consistent with the layer-wise attribution.

\paragraph{Why the residual costs editing but not preservation.} Decomposing the error $E=\Delta_{\text{proposed}}-\Delta_{\text{central}}^{\mathrm{Alpha}}$ as in Lemma~\ref{lem:cancel}(b), the $E\Pnull$ component accounts for $100.0\%$ of the error across all layers and seeds, while the orthogonal $E(I-\Pnull)$ component stays at the \oons\ floor ($3.2\times10^{-4}\%$). The residual thus perturbs edit strength inside the subspace the editor writes into, leaving preserved knowledge untouched.

\begin{figure}[t]
\centering
\includegraphics[width=0.82\columnwidth]{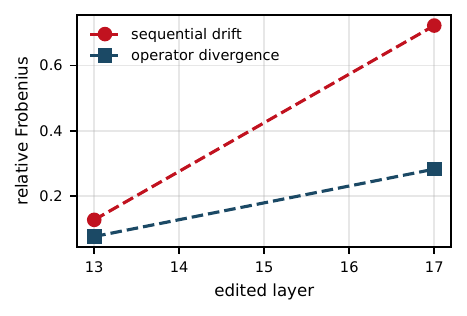}
\caption{Sequential drift $\Fnorm{\Delta_{\text{proposed}} - \Delta_{\text{central}}^{\mathrm{Alpha}}}/\Fnorm{\Delta_{\text{central}}^{\mathrm{Alpha}}}$ and operator divergence $\Fnorm{N_{\text{all}} - N_G}/\Fnorm{N_G}$ increase monotonically across the edited layers ($13$--$17$), peaking at the last; five-layer means $0.4547$ and $0.1879$. Markers show the layer-$13$ and layer-$17$ endpoints.}
\label{fig:layerdiag}
\end{figure}

\begin{table}[t]
\centering
\begin{tabular}{lc}
\toprule
 & GLUE avg.\ F1 \\
\midrule
base         & $0.4013$ \\
Centralized  & $0.3728$ \\
Naive        & $0.4087$ \\
FedAlphaEdit & $0.3732 \pm 0.0161$ \\
\bottomrule
\end{tabular}
\caption{GLUE average F1 over six tasks ($n=100$ per task), a supportive, non-binding observation. FedAlphaEdit is the three-seed mean$\pm$SD; base and Centralized are single values, and Naive is a three-seed mean. The naive combination sits nearer the base level than centralized editing does, consistent with the distorted merge writing edits more weakly (Section~\ref{sec:exp-naive}) rather than an advantage.}
\label{tab:glue}
\end{table}

\paragraph{Shared-covariance weight sensitivity.}
Varying the shared-covariance weight $\lambda$ (the second-moment update weight applied to the base covariance $C$) across $\{10000, 20000, 40000\}$ on the naive combination (GPT-2~XL, $10 \times 500$, seed~1) yields Table~\ref{tab:lamc}.
\begin{table}[t]
\centering
\begin{tabular}{lccc}
\toprule
$\lambda$ & ES & NS & \oons \\
\midrule
$10000$ & $84.28$ & $62.19$ & $15.80\%$ \\
$20000$ & $78.84$ & $60.12$ & $14.05\%$ \\
$40000$ & $73.72$ & $58.05$ & $11.98\%$ \\
\bottomrule
\end{tabular}
\caption{Sensitivity of the naive combination to the shared-covariance weight $\lambda$ (GPT-2 XL, MCF, $10 \times 500$, seed~1). Varying $\lambda$ trades edit success and specificity against \oons; no setting recovers both axes toward the centralized level.}
\label{tab:lamc}
\end{table}

\begin{table}[t]
\centering
\small
\setlength{\tabcolsep}{4pt}
\begin{tabular}{lccc}
\toprule
$\gamma$ & ES & NS & \oons \\
\midrule
$0.1$ ($=1/N$) & $31.23 \pm 0.22$ & $77.16 \pm 0.17$ & $3.143{\times}10^{-4}\%$ \\
$0.5$          & $55.16 \pm 0.47$ & $54.98 \pm 0.42$ & $3.143{\times}10^{-4}\%$ \\
$1.0$          & $51.17 \pm 0.27$ & $51.07 \pm 0.29$ & $3.143{\times}10^{-4}\%$ \\
\bottomrule
\end{tabular}
\caption{Scalar merges $\gamma\sum_i \Delta_i$ (GPT-2 XL, MCF, $10 \times 500$), three-seed mean$\pm$SD. The per-client AlphaEdit updates of the simple-average arm are reused and only evaluated, with no new solves. The row $\gamma = 0.1$ is the simple-average row of Table~\ref{tab:g2res}. Only three levels of $\gamma$ were tested, and finer tuning is untested. \oons\ is at the measurement floor in every row by construction of a scalar merge.}
\label{tab:scalar}
\end{table}

\paragraph{Near-tie calibration.}
First-token decisions on the Qwen zsRE prompts are compared between a single fp32 evaluation pass and chunked passes over $694$ per-prompt Boolean outcomes. Of these, $3/694$ invert (one at chunk size $2$, two at chunk size $8$). The minimum fp32 top-1/top-2 logit gap is $0.000467$, below the bf16 unit-in-last-place granularity at head scale, which is the mechanism of the inversions and sets the sensitivity band used in Section~\ref{sec:exp-zsre}.

\end{document}